\DeclareMathOperator*{\argmin}{arg\,min}
\begin{document}
\newcommand{\todo}[1]{\textcolor{magenta}{[#1]}}
\IEEEoverridecommandlockouts 
\newcommand\independent{\protect\mathpalette{\protect\independenT}{\perp}}
\def\independenT#1#2{\mathrel{\rlap{$#1#2$}\mkern2mu{#1#2}}}

\newcommand{\adpact}{AdPaCT}
\newcommand{\admapl}{AMPL}

\newcommand{\gcal}{\mathcal{G}}
\newcommand{\gcali}{\mathcal{G}^{i}}
\newcommand{\tcal}{\mathcal{T}}
\newcommand{\vcal}{\mathcal{V}}
\newcommand{\vcali}{\mathcal{V}^{i}}
\newcommand{\ecal}{\mathcal{E}}
\newcommand{\ecali}{\mathcal{E}^{i}}
\newcommand{\ind}{\mathbbm{1}}
\newcommand{\tsx}{\tilde{\sigma}_X}
\newtheorem{theorem}{Theorem}
\newtheorem{proposition}{Proposition}
\newtheorem{lemma}{Lemma}
\newtheorem{definition}{Definition}
\newtheorem{claim}{Claim}
\newtheorem{corollary}{Corollary}
\newtheorem{assumption}{Assumption}
\theoremstyle{remark}\newtheorem*{remark}{{\bf Remark}}

\newcommand{\vertiii}[1]{{\left\vert\kern-0.25ex\left\vert\kern-0.25ex\left\vert #1 
    \right\vert\kern-0.25ex\right\vert\kern-0.25ex\right\vert}}

\makeatletter
\def\footnoterule{\relax%
  \kern-5pt
  \hbox to \columnwidth{\hspace{0.2mm}\vrule width 0.5\columnwidth height 0.4pt\hfill}
  \kern4.6pt}
\makeatother
\title{Active Learning Algorithms for Graphical Model Selection\textsuperscript{\small\P}}

\author{\vspace{5mm}Gautam Dasarathy\textsuperscript{$\dagger$} \hspace{2mm} Aarti Singh\textsuperscript{$\ddagger$} \hspace{2mm} Maria Florina Balcan\textsuperscript{$\ast$}\\ \vspace{1mm}{\hfill Jong Hyuk Park\textsuperscript{$\flat$}\hfill}
}
\maketitle


\begin{abstract}
The problem of learning the structure of a high dimensional graphical model from data has received considerable attention in recent years. In many applications such as sensor networks and proteomics  it is often expensive to obtain samples from all the variables involved simultaneously. For instance, this might involve the synchronization of a large number of sensors or the tagging of a large number of proteins.  To address this important issue, we initiate the study of a novel graphical model selection problem, where the goal is to optimize the total number of scalar samples obtained by allowing the collection of samples from only subsets of the variables. We propose a general paradigm for graphical model selection where feedback is used to guide the sampling to high degree vertices, while obtaining only few samples from the ones with the low degrees. We instantiate this framework with two specific active learning algorithms, one of which makes mild assumptions but is computationally expensive, while the other is more computationally efficient but requires stronger (nevertheless standard) assumptions. Whereas the sample complexity of passive algorithms is typically a function of the maximum degree of the graph, we show that the sample complexity of our algorithms is provable smaller and that it depends on a novel local complexity measure that is akin to the average degree of the graph. We finally demonstrate the efficacy of our framework via simulations.

\end{abstract}

\renewcommand{\thefootnote}{$\P$}
\footnotetext[1]{Keywords: high dimensional graphical model selection, active learning, sample complexity\vspace{1mm}}
\renewcommand{\thefootnote}{$\dagger$}
\footnotetext[1]{Machine Learning Department, Carnegie Mellon University. \texttt{gautamd@cs.cmu.edu}}
\renewcommand{\thefootnote}{$\ddagger$}
\footnotetext[1]{Machine Learning Department, Carnegie Mellon University. Supported in part by NSF grants IIS-1247658, CAREER IIS-1252412, and AFOSR YIP FA9550-14-1-0285. \texttt{aarti@cs.cmu.edu}}
\renewcommand{\thefootnote}{$\ast$}
\footnotetext[1]{Machine Learning  and Computer Science Departments, Carnegie Mellon University. Supported in part by NSF grants NSF CCF-145117, NSF CCF-1422910, and a Sloan Fellowship. \texttt{ninamf@cs.cmu.edu}}
\renewcommand{\thefootnote}{$\flat$}
\footnotetext[1]{Computer Science Department, Carnegie Mellon University. \texttt{jp1@andrew.cmu.edu}}

\renewcommand*{\thefootnote}{\fnsymbol{footnote}}

\thispagestyle{plain}
\section{Introduction}
\label{sec:intro}

Probabilistic graphical models provide a powerful formalism for expressing the relationships among a large family of random variables. They are finding  applications in increasingly complex scenarios from computer vision and natural language processing to computational biology and statistical physics. One  important problem associated with graphical models is that of learning the structure of dependencies between the variables described by such a model from data. This is useful as it not only allows a succinct representation of a potentially complex multivariate distribution, but it might in fact reveal fundamental relationships among the underlying variables.  Unfortunately, the problem of learning the structure of graphical models is known to be hard in the high dimensional setting (where the number of observations is typically smaller than the total number of variables) since many natural sufficient statistics such as the sample covariance matrix are poorly behaved (see e.g., \cite{marvcenko1967distribution, johnstone2001distribution}). An exciting line of work has explored many conditions under which this problem becomes tractable (e.g., \cite{ravikumar2010high, ravikumar2011high, anandkumar2012ising, anandkumar2012gaussian}). Various authors have discovered that by constraining both the structure of the graph and the parameters of the probabilistic model, there are interesting situations where given $\mathcal{O}\left( \log p \right)$ samples from the underlying distribution, one can learn the structure and sometimes the parameters of the underlying graphical model. 

In this paper, we will look at this problem in a different light. In a wide variety of situations, it might be costly to obtain many samples across all the variables in the underlying system. For instance, in a sensor network, obtaining a sample across all the sensors is equivalent to obtaining a synchronized measurement from all the sensors. Similarly in many other applications in neuroscience \cite{keshri2013shotgun, turaga2013inferring, bishop2014deterministic} and proteomics \cite{sachs2005causal}, it might be much easier to obtain (marginalized) samples from small subsets of variables as opposed to full snapshots. 

We propose to handle this problem by what we call \emph{active marginalization}. That is, we present a general paradigm for ``activizing'' graphical model structure  learning algorithms. In the spirit of active learning, the algorithms we propose decide which vertices to marginalize out (and therefore which vertices to sample further from) based on the samples previously obtained. This  general framework is laid out in Algorithm~\ref{alg:meta}, which may be considered a meta algorithm that can be used to make any vertex based graph structure learning algorithm active. The algorithm principally uses two subroutines, one which selects candidate neighborhoods given a vertex and the other one which verifies if this candidate is a good one. We then instantiate this in two different ways for structure learning of Gaussian graphical models. Algorithm~\ref{alg:exhaustiveAlg}, which we call \adpact{} (for Adaptive Partial Correlation Testing) is an exhaustive search based algorithm that shows improved sample complexity over passive algorithms. However, like other algorithms based on exhaustive search, the computational complexity of this algorithm is often prohibitive. We then propose Algorithm~\ref{alg:LassoAlg}, which we call \admapl{} (for Adaptive Marginalization based Parallel Lasso). Algorithm~\ref{alg:LassoAlg} performs the neighborhood selection far more efficiently using the Lasso (as in the seminal work of \cite{meinshausen2006high}). At the cost of more restrictive, but standard assumptions, this gives us a much more computationally efficient  algorithm whose total sample complexity is similar to that of Algorithm~\ref{alg:exhaustiveAlg}, and therefore significantly better than its passive counterparts. 

While the sample complexity, i.e., the total number of samples needed for exact structure recovery with high probability, for typical passive graph learning algorithms scale as a function of the maximum degree $d_{\rm max}$ of the graph, the sample complexity of the algorithms we propose scale as a function of a more local property of the graph. For instance, in the Gaussian setting, the neighborhood selection algorithm of \cite{meinshausen2006high} is guaranteed to reconstruct the graph if the number of (scalar) measurements it obtains scales as $\mathcal{O}(d_{\rm max} p\log p)$ (i.e., $\mathcal{O}(d_{\rm max} \log p)$ samples each in $p$ dimensions). On the other hand, \adpact{} (Algorithm~\ref{alg:exhaustiveAlg}) and \admapl{}(Algorithm~\ref{alg:LassoAlg}) are guaranteed to work with $\mathcal{O}(\bar{d}_{\rm max} p \log p)$ samples, where $\bar{d}_{\rm max}$ is the average maximum degree across all the neighborhoods in the graph. We will define this quantity formally in Section~\ref{sec:prelim} and as we shall see in Section~\ref{sec:dbarmax}, $\bar{d}_{\rm max}$ can be significantly smaller than $d_{\rm max}$. 

\subsection{Related Work}
Active learning has been a major topic in recent years in machine
learning and an exhaustive literature survey is beyond the scope of this paper. We will point the reader, for instance, to \cite{balcan2006agnostic, dasgupta2007general, castro2008minimax, beygelzimer2010agnostic, hanneke2014theory}, for a taste of the  significant progress on general active learning algorithms with provable guarantees. It is worth noting that a majority of these results have been for supervised classification. To the best of our knowledge, the analysis of the power of active learning algorithms in the context of model selection for sparse graphical models is novel. A recent line of work \cite{arias2013fundamental, malloy2014near, haupt2009distilled} considers adaptive algorithms for recovering sparse vectors. As we shall see in the following sections, the algorithms we propose take advantage of the graph structure to outperform their passive counterparts. It is not immediately clear how the adaptive sparse recovery algorithms can be made to exploit such structure. There has also been some recent work on exploring the power of active learning for unsupervised learning tasks like clustering, e.g., \cite{eriksson2011active, krishnamurthy2012efficient, voevodski2012active,ailon2013breaking}, which is thematically closes to the goals of this paper. In fact, some of these algorithms may be modified from their current forms in order to yield active learning algorithms that can consistently learn tree-structured graphs. We also note here that the authors of \cite{vats2014active} consider a different notion of active graphical model selection. Their theoretical results   show the benefit of active learning for a specific family of graphs with two clusters; it does not appear to be straightforward to characterize the benefits of this algorithm in general. The present paper on the other hand, provides algorithms that outperform their classical passive counterparts both in theory and in experiments for more general sparse graphs. 

We would also like to point out related work on active learning for parameter and structure estimation in graphical models (c.f. \cite{tong2000active, tong2001active,murphy2001active}) that operates under the intervention model i.e. the query model allows obtaining conditional measurements where the values of some of the variables are fixed. In contrast, we explore the marginal query model where the samples are from marginal distributions, which is more natural in applications like sensor network inference and proteomics. The intervention model is particularly useful when the goal is causal structure discovery, while our paper does not endeavor to do this.

Finally, there is of course extensive work on passive algorithms for graphical model selection in high dimensions, see, e.g., \cite{meinshausen2006high, yuan2007model, ravikumar2011high, yuan2010high, ravikumar2010high,xue2012nonconcave,  kalisch2007estimating, anandkumar2012gaussian, anandkumar2012ising}. While some of these algorithms readily fit into the \emph{activization framework} of Algorithm~\ref{alg:meta}, creating active learning counterparts of other successful algorithms is an interesting avenue for future work.

\section{Problem Statement}
\label{sec:prelim}
Let $G = ([p],E)$ denote an undirected graph on the vertex set $[p]= \left\{ 1,2,\ldots,p \right\}$ with edge set $E \subseteq {p\choose 2}$. To each vertex of this graph, we associate the components of a  $0-$mean Gaussian\footnote{Our general framework applies much more broadly. We assume Gaussian distributions for ease of presentation.} random vector $X\in \mathbb{R}^p$ with covariance matrix $\Sigma\in \mathbb{R}^{p\times p}$. The density of $X$ is then given by 
\begin{equation}
f_X(x_1,\ldots x_p) = \frac{1}{\sqrt{(2 \pi)^p \left| K \right|}} \exp\left\{ -\frac{1}{2}x^T K x \right\},
\end{equation}
where $K = \Sigma^{-1}$ is the $p-$dimensional inverse covariance (concentration) matrix. We will abbreviate this density as $\mathcal{N}(0,\Sigma)$ in the sequel. 

$X$ is said to be Markov with respect to the graph $G$ if for any pair of vertices $i$ and $j$,  $\left\{ i,j \right\}\notin E$ implies that $X_i \independent X_j \mid X_{[p] \setminus \left\{ i,j \right\}}$. By the Hammersley-Clifford theorem \cite{lauritzen1996graphical}, we know that for all $\left\{ i,j \right\}\notin E$, $K_{ij} = 0$.

In this paper, we study the problem of recovering the structure of the graph $G$ given samples from the distribution $f_X$, that is, we would like to construct an estimate $\widehat{E}$ of the edge set of the underlying graph. As mentioned in Section~\ref{sec:intro}, we are interested in the setting where our estimators are allowed to \emph{actively marginalize} the components of $X$ and only observe samples from the desired components of the random vector $X$. More formally, operating in $L$ stages, the estimator or algorithm produces a sequence $\left\{ (S_k, n_k) \right\}_{k\in [L]}$, where $S_k\subseteq [p]$ and $n_k\in \mathbb{N}$. For each $k\in [L]$, the algorithm receives $n_k$ samples of the marginalized random vector $X_{S_k}\in \mathbb{R}^{\left| S_k \right|}$. Notice that $X_{S_k}$ is distributed according to $\mathcal{N}\left(0,\Sigma(S_k)\right)$, where $\Sigma(S_k)$ denotes the sub-block of the $\Sigma$ corresponding to the indices in $S_k$. The algorithm is also allowed to be \emph{adaptive} in that the choice of $(S_k, n_k)$ is allowed to depend on all the previous samples obtained. Notice that the ``passive'' algorithms (e.g., \cite{meinshausen2006high}, \cite{kalisch2007estimating}, \cite{ravikumar2011high}) can be thought of as operating in a single stage with $S_1 = [p]$. 

We will now define a natural metric for evaluating the performance of a graphical model selection algorithm that both allows us to compare the algorithms proposed here with their passive counterparts, and reflects the penalty for obtaining samples from large subsets of variables. Towards this end, first observe that the total number of scalar samples obtained by an algorithm as defined in the previous paragraph is given by $\sum_{k=1}^L \left| S_k \right|n_k$. Then, we may define the following.  

\begin{definition}[Total Sample Complexity]
\label{def:tsc}
Fix $\delta\in (0,1)$. Suppose that an algorithm returns an estimate $\widehat{E}_n$ given a budget of $n$ total scalar samples. We will say that its total sample complexity at confidence level $\delta$ is $n_0$ if for all $n \geq n_0$, $\mathbb{P}\left[ \widehat{E}_n \neq E \right]\leq 1-\delta$. 
\end{definition}

Our primary objective will be to produce active marginalization algorithms and demonstrate sufficient conditions on their total sample complexity (at some given level $\delta$). These sufficient conditions are expressed as a function of the graph size $p$, the magnitudes of the entries in the covariance matrix $\Sigma$, and an interesting structural property of the graph. To define this, we require a few more definitions. For a vertex $i\in [p]$, we will define its neighborhood ${N}_G(i) \triangleq \left\{ j\in [p]: \left\{ i,j \right\}\in E \right\}$ and the closure of its neighborhood $\overline{{N}}_G(i) \triangleq N_G(i) \cup \left\{ i \right\}$. The size of the set $N(i)$ will be referred to as the degree $d(i)$ of this vertex  \footnote{We will suppress the dependence on $G$ when it is clear from the context} and we define the \emph{maximum degree} $d_{\rm max} \triangleq \max_{i\in [p]} d(i)$. In addition to these standard definitions, we also define the \emph{local maximum degree} of a vertex as $d_{\rm max}^i \triangleq \max_{j\in \overline{N}(i)} d(j)$, that is, the maximum degree in $i$'s closed neighborhood. We also define the following quantity 
\begin{equation}
\bar{d}_{\rm max} \triangleq \frac{1}{p}\sum_{i=1}^p d^{i}_{\rm max},
\end{equation}
which is the average of the local maximum degrees. As we will see in the sequel, $\bar{d}_{\rm max}$, which is a more local notion of the complexity of the graph structure, will play a central role in the statement of our results concerning the total sample complexity of active learning algorithms for graphical model selection. 

First, we observe that in the passive setting, the total sample complexity (TSC) at some (constant) confidence level $\delta$ is  typically shown to be $\mathcal{O}(d_{\rm max} p \log p)$. For instance, using the results of \cite{wainwright2009sharp}, we can deduce that the Lasso based neighborhood selection method of \cite{meinshausen2006high} has a TSC which is no more than $\mathcal{O}(d_{\rm max} p \log p)$. Similarly, the PC algorithm \cite{spirtes2000causation}, which is a smart exhaustive search algorithm, has a TSC that scales like $\mathcal{O}(d_{\rm max} p \log p)$ (see \cite{kalisch2007estimating}). 

On the other hand, in Sections~\ref{sec:exhaustiveAlg}~and~\ref{sec:lassoAlgo} we will demonstrate algorithms whose total sample complexity (TSC) is bounded from above by $\mathcal{O}(\bar{d}_{\rm max} p \log p)$. By definition, $\bar{d}_{\rm max} \leq d_{\rm max}$, and it is not hard to see that $\bar{d}_{\rm max}$ can be significantly smaller than $d_{\rm max}$ when the graph has a heterogenous degree sequence, which is quite typical in many practical applications. 
Therefore the total sample complexity of our algorithms scales at a much better rate than their passive counterparts, which also results in a  significant practical advantage as shown by our preliminary simulation study in Section~\ref{sec:simulations}. 

In the wake of the significant progress made in the high-dimensional statistics literature, we can reason about these results intuitively. To discover the neighborhood of a vertex $i\in [p]$, it should suffice to sample $i$ and (at least) its neighbors $\mathcal{O}(d_i \log p)$. This implies that the neighbor of $i$ with maximum degree in $\bar{N}(i)$ will force $i$ to be sampled $\mathcal{O}(d^i_{\rm max}\log p)$ times. Therefore, a total of $\mathcal{O}(\bar{d}_{\rm max} p \log p)$ samples should suffice, if the algorithm is able to focus its samples on the right subsets of vertices. We demonstrate in Sections~\ref{sec:exhaustiveAlg}~and~\ref{sec:lassoAlgo} exactly how this can be achieved.  It is also curious to note that $\bar{d}_{\rm max}$ appears as the natural notion of local complexity of the graph rather than, say, the average degree. Understanding whether this quantity is fundamental to active learning of graphical models, via a minimax lower bound, for instance, is an extremely interesting avenue for future work. 

Formal lower bound arguments for passive algorithms for recovering the 
structure of Gaussian graphical models exist in \cite{wang2010information}, 
which establishes that for the class of graphs with maximum degree 
$d_{\max}$, if the smallest partial correlation coefficient between any 
pair of nodes conditioned on any subset of nodes is bounded from below 
by a constant, then the passive algorithms require $\Omega(d_{\max} p \log 
p)$ scalar observations. The class of graphs with average maximum degree 
$\bar{d}_{\max}$ is a subset of the class of graphs with maximum degree 
$d_{\max}$, implying that the lower bound may not apply to the former. 
However, a careful investigation of the lower bound construction in 
\cite{wang2010information} reveals that the lower bound is based on hardest 
examples from the class of graphs with a single clique of size $d_{\max}$ 
and all other nodes disconnected, for which $\bar{d}_{\max} = d_{\max}^2/p 
\leq d_{\max}$. This establishes that, for the class of graphs whose $d_{\rm max}$ and $\bar{d}_{\rm max}$ are such that $\bar{d}_{\max} \leq d_{\max}^2/p$, all passive 
algorithms are much worse than the active algorithms proposed in this paper. 

\subsection{$\bar{d}_{\rm max}$ versus $d_{\rm max}$}
\label{sec:dbarmax}
As mentioned earlier, the total sample complexity of our algorithms scale like $\mathcal{O}(\bar{d}_{\rm max} p\log p)$ as opposed to the typical  $\mathcal{O}(d_{\rm max} p\log p)$  scaling of passive algorithms. By definition, we can see that $\bar{d}_{\rm max} \leq d_{\rm max}$. There are many situations where $\bar{d}_{\rm max}$ can be significantly smaller than $d_{\rm max}$, potentially allowing the active algorithms we propose to yield steep savings in terms of the total sample complexity by running the active algorithm. As is often the case in real-world graphs, if the degree distribution is non-uniform, it is likely the case that $\bar{d}_{\rm max}$ is significantly smaller than $d_{\rm max}$. For instance, consider a graph on $p$ vertices where there are $\Theta(p)$ low (say $\Theta(1)$) degree vertices, and there are very few (say $\Theta(1)$) vertices with degree $d$. For such a graph, observe that $\bar{d}_{\rm max}$ scales like $d^2/p$, while $d_{\rm max}$ scales like $d$.

Another interesting case is that of graphs with small \emph{correlation dimension} \cite{chan2007approximation}, which can be thought of as a global version of the doubling dimension (see, e.g., \cite{slivkins2007distance} for more on doubling dimension). $\kappa$ is said to be the correlation dimension of $G = ([p],E)$ if $\kappa$ is the smallest constant such that for all $r\in [p]$:  $\sum_{i\in [p]} \left| B(i,2r) \right|\leq 2^\kappa \sum_{i\in [p]} \left| B(i,r) \right|$, where $B(i,r)$ is the set of all vertices that are at most $r$ away (in shortest path distance) from $i$. Such graphs are of interest since they, like graphs with small  doubling dimension, are amenable to more efficient graph processing algorithms (see \cite{chan2007approximation} and references there in). Suppose a graph has a correlation dimension of $\kappa$, then it is not hard to see that $\bar{d}_{\rm max} \leq 2^\kappa \sum_{i} d_i/p$, i.e., $\bar{d}_{\rm max}$ is controlled by the average degree of the graph, which can be significantly smaller than $d_{\rm max}$.

\section{A GENERAL FRAMEWORK FOR ACTIVE GRAPHICAL MODEL SELECTION}
\label{sec:metaAlgo}

In this section, we will first describe our general framework for building an active algorithm for graphical model selection; this is described in Algorithm~\ref{alg:meta}. We note here that the ideas and results here are not predicated upon the assumption that $X$ is Gaussian. 
\algnewcommand{\LineComment}[1]{\Statex \(/*\) {\bf #1} \(*/\)}
\begin{algorithm*}[]
\caption{Active Neighborhood Selection}
\label{alg:meta}
\begin{algorithmic}[1]
\Require{budget  $B\in \mathbb{N}$, sample complexity functions $g(), h()$}, subroutines {\tt\bf nbdSelect, nbdVerify}. 
\State Set $\ell = 1$, ${\rm nSamples} = 0$, and initialize $\widehat{N}(i)$, $\forall i\in [p]$, \textsc{ nbdFound, settled},  $S_1$, $S_2$ to $\emptyset$ (the empty set). 
\vspace{2mm}
\LineComment{Obtain new samples from {\sc settled}$^c$}
\vspace{2mm}
\Repeat 
\State Obtain $g(\ell)$ independent samples $X_{\mbox{\sc settled}^c}^{(j)}$,$j = 1,\ldots, g(\ell)$; add to $S_1$
\State Obtain $h(\ell)$ independent samples $X_{\mbox{\sc settled}^c}^{(j)},$ $j = 1,\ldots, h(\ell)$; add to $S_2$
\State Increment ${\rm nSamples}$ by $\left( p - \left| \mbox{\small\sc settled} \right|\right)$ $\times \left( g(\ell)  + h(\ell) \right)$.
\vspace{2mm}
\LineComment{Generate and verify candidate neighborhoods}
\vspace{2mm}
\For{$i\in $ {\sc nbdFound}$^c$}
\State{$\widehat{N}(i) = $ {\tt\bf nbdSelect}$(i,\ell, \{X_{\mbox{\sc settled}^c}^{(j)}\}_{j\in S_1})$} \qquad\;\;\qquad $/*$ $|\widehat{N}(i)|$ is at most $\ell$ $*/$
\If{({\tt\bf nbdVerify}$(i,\widehat{N}(i), \{X_{\mbox{\sc settled}^c}^{(j)}\}_{j\in S_2})$ = {\bf true})}
\State {\sc nbdFound} $=$ {\sc nbdFound} $\cup \left\{ i \right\}$
\EndIf
\EndFor
\vspace{2mm}
\LineComment{Settle vertices}
\vspace{2mm}
\For {$i \in $ {\sc nbdFound}}
\If{$\widehat{N}(i) \subseteq $ {\sc nbdFound}} {\sc settled} $=$ {\sc settled} $\cup \left\{ i \right\}$ \EndIf
\EndFor
\State Set $\ell = 2\times \ell$, $S_1,S_2 = \emptyset$.
\Until {$\ell \geq 2p$ or {\sc nbdFound} $=[p]$ or ${\rm nSamples}> B$}
\State \Return{Graph $\widehat{G}$ such that $\left\{ i,j \right\}\in \widehat{G}\Leftrightarrow i\in \widehat{N}(j)$ or $j\in \widehat{N}(i)$}
\end{algorithmic}
\end{algorithm*}

Algorithm~\ref{alg:meta} accepts a natural number $B$ which is a budget for the total number of scalar samples allowed. In our theoretical analysis, we will identify a sufficient condition for the budget in terms of natural parameters of the graphs we wish to learn. Algorithm~\ref{alg:meta} also accepts ``sample complexity functions'' $g, f: \mathbb{N}\to \mathbb{N}$. Finally, the algorithm depends on two subroutines: {\tt\bf nbdSelect}() and {\tt\bf nbdVerify}(). The former subroutine takes as input (the index of) a vertex $i$, a candidate neighborhood size $\ell$, and samples
from a subset $S$ of the variables. It then return a subset of $S$ of size no more than $\ell$ as its estimate $\widehat{N}(i)$ of the neighborhood of $i$. {\tt\bf nbdVerify()}  accepts a vertex $i$, a candidate neighborhood $\widehat{N}(i)$, and samples from the variables in {\sc settled}$^c$ $= [p]\setminus ${\sc settled}. It then checks if $\widehat{N}(i)$ is indeed a potential neighborhood of $i$.  We will refer to $g()$ and $h()$ as the sample complexity functions of the subroutines {\tt\bf nbdSelect} and {\tt\bf nbdVerify} respectively. At this point, we will let  these subroutines and their sample complexity functions remain abstract and focus on the structural details of our active graphical model selection framework. Sections~\ref{sec:exhaustiveAlg} and \ref{sec:lassoAlgo} will demonstrate two explicit  instantiations of this framework when $X$ is Gaussian, and the application to more general distributions would simply involve establishing appropriate instantiations of {\tt\bf nbdSelect}(),  {\tt\bf nbdVerify}(), $g()$, and $h()$. 

We will now describe Algorithm~\ref{alg:meta}. We start with an empty graph on $[p]$, i.e., $\widehat{N}(i) = \emptyset$ for all $i\in [p]$. And initialize the counter $\ell$ to 1,  and the variable {\rm nSamples} to $0$. We also initialize the sets {\sc nbdFound, settled} to $\emptyset$. As the names suggest, {\sc nbdFound} will be used to keep track of the vertices whose neighborhood estimates the algorithm is confident about and {\sc settled} keeps track of the vertices that no longer need to be sampled from. 
Notice that the faster {\sc settled} is populated, the better the performance is of Algorithm~\ref{alg:meta} in terms of the total sample complexity, since in successive stages only the vertices in {\sc settled}$^c$ are sampled.  The algorithm then loops over $\ell$ (by doubling it) until one of the following holds: $\ell>2p$, {\sc nbdFound} $= [p]$ or nSamples exceeds the budget. At each iteration, the algorithm obtains $g(\ell)+ h(\ell)$ new samples from variables in the set {\sc settled}$^c$ and stores these in $S_1$ and $S_2$ as in Steps 3 and 4. Next, in Steps 6-11, for each vertex $i\notin $ {\sc settled} the algorithm uses the subroutine {\tt\bf\bf nbdSelect}() to estimate a neighborhood of $i$ of size at most $\ell$. And, then if this neighborhood passes the check of the subroutine {\tt\bf nbdVerify}, the algorithm adds $i$ to the set {\tt\bf nbdFound}. Finally, in Steps 12 - 15, the set {\sc settled} gets updated. Any $i$ in {\sc nbdFound} whose  entire estimated neighborhood is in {\sc nbdFound} gets enrolled in {\sc settled} and does not get sampled henceforth. That is, the algorithm ``settles'' a vertex $i\in [p]$ if it is both confident about the vertex's neighborhood and about the neighborhood of $i$'s neighbors. It is this step that gives our algorithm its improved total sample complexity. 

We are now ready to state the following result that characterizes the performance of Algorithm~\ref{alg:meta}. We will postpone the proof, which formalizes the intuition of the above marginalization argument, to Appendix~\ref{sec:proofOfTheorem1}.

\begin{theorem}
\label{thm:metaAlg}
Fix $\delta \in (0,1)$. For each $\ell \leq d_{\rm max}$, assume that the subroutines {\tt\bf nbdSelect} and {\tt\bf nbdVerify} satisfy: 
\begin{enumerate}
\item[(C1)] For any vertex $i\in [p]$ and subset $F\subseteq [p]$ that are such that $\left| N(i) \right| = d_i \leq \ell$ and $\overline{N}(i) \subseteq F$, the following holds. Given $g(\ell)$ samples from $X_F$, {\tt\bf nbdSelect}$(i, \ell, \left\{ X_F^{(j)} \right\}_{j\in S_1})$ returns the true neighborhood of $i$ with probability greater than $1-\delta/2p d_{\rm max}$. 
\item[(C2)] For any vertex $i\in [p]$ and subsets $F, H\subseteq [p]$ that are such that $\left| N(i) \right| = d_i \leq \ell$, $\overline{N}(i) \subseteq F$, and $H\subseteq F$, the following holds. Given $h(\left| H \right|)$ samples from $X_F$, {\tt\bf nbdVerify} $(i,H,\left\{ X_F^{(j)} \right\})$ returns {\bf true} if and only if $N(i)\subseteq H$ with probability greater than $1-\delta/2p d_{\rm max}$. 
\end{enumerate}
Then, with probability no less than $1-\delta$, Algorithm~\ref{alg:meta} returns the correct graph. Furthermore, it suffices if $B\geq \sum_{i\in [p]}\sum_{0\leq k \leq \lceil \log_2d_{\rm max}^i \rceil} g(2^k)  + h(2^k))$. That is, Algorithm~\ref{alg:meta} has a total sample complexity of $\sum_{i\in [p]}\sum_{0\leq k \leq \lceil \log_2d_{\rm max}^i \rceil} g(2^k)  + h(2^k)$ at confidence level $1-\delta$. 
\end{theorem}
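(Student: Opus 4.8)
The plan is to condition on a single high-probability \emph{good event} on which both subroutines behave exactly as promised for the duration of the run, and then argue \emph{deterministically} about the dynamics of the sets {\sc nbdFound} and {\sc settled}. First I would define the good event $\mathcal{E}$ to be the intersection, over every vertex $i\in[p]$ and every counter value $\ell=2^k$ with $0\le k\le \lceil\log_2 d_{\rm max}\rceil$, of the success events in (C1) and (C2) for the calls that are actually made on $i$. Since {\tt\bf nbdSelect}/{\tt\bf nbdVerify} are invoked on $i$ only while $i\notin${\sc nbdFound}, and (as the scheduling claim below shows) this occurs for at most $\lceil\log_2 d_{\rm max}\rceil+1\le d_{\rm max}$ iterations, there are at most $p\cdot d_{\rm max}$ relevant (vertex, iteration) pairs, each carrying two calls that fail with probability at most $\delta/(2pd_{\rm max})$. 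A union bound then yields $\mathbb{P}[\mathcal{E}^c]\le p\cdot d_{\rm max}\cdot 2\cdot \tfrac{\delta}{2pd_{\rm max}}=\delta$; here the factor $d_{\rm max}$ in the denominators of (C1) and (C2) is exactly what absorbs the per-vertex iteration count.

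The heart of the argument — and the step I expect to be the main obstacle — is a deterministic \emph{scheduling claim} valid on $\mathcal{E}$: each vertex $i$ enters {\sc nbdFound} precisely at iteration $\ell_i:=2^{\lceil\log_2 d_i\rceil}$ (the first power of two that is $\ge d_i$), and is placed in {\sc settled} at the first iteration with $\ell\ge d_{\rm max}^i$. The key observation that makes (C1)/(C2) applicable is that the closed neighborhood $\overline{N}(i)$ is never sampled out from under $i$ prematurely: any $v\in\overline{N}(i)$ satisfies $i\in\overline{N}(v)$ and hence $d_{\rm max}^v\ge d_i$, so $v$ cannot be settled at any iteration with $\ell<d_i$. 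Consequently, at the sampling step (Steps~3--4) of iteration $\ell_i$ the entire set $\overline{N}(i)$ still lies in {\sc settled}$^c=:F$, the fresh batch contains $X_{\overline{N}(i)}$, and since $\ell_i\ge d_i$ I may invoke (C1) to conclude {\tt\bf nbdSelect} returns exactly $N(i)$ and (C2) to conclude {\tt\bf nbdVerify} accepts it, so $i$ joins {\sc nbdFound} with the \emph{correct} neighborhood. Earlier false positives are excluded by the ``only if'' direction of (C2): at any iteration with $\ell<d_i$ the candidate $\widehat{N}(i)$ has size at most $\ell<d_i$, hence $N(i)\not\subseteq\widehat{N}(i)$ and verification returns {\bf false} (the test runs with $h(\ell)\ge h(|\widehat{N}(i)|)$ samples, so the guarantee, monotone in sample size, still applies). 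The settling time then follows because the rule ``settle $i$ once $\overline{N}(i)\subseteq${\sc nbdFound}'' triggers precisely when every vertex of $\overline{N}(i)$ — the last of which has degree $d_{\rm max}^i$ — has been found, i.e.\ at the first $\ell\ge d_{\rm max}^i$.

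Finally, I would convert this schedule into the budget bound by attributing scalar samples to vertices. Each batch in Steps~3--4 draws $g(\ell)+h(\ell)$ vector samples over {\sc settled}$^c$, so Step~5 increments the counter by $(p-|\text{\sc settled}|)(g(\ell)+h(\ell))=\sum_{v\notin\text{\sc settled}}(g(\ell)+h(\ell))$; summing over iterations and exchanging the order of summation gives a total of $\sum_{i\in[p]}\sum_{k}(g(2^k)+h(2^k))$, where for each $i$ the inner sum ranges over exactly the iterations at which $i$ is still unsettled. By the scheduling claim these are $k=0,1,\ldots,\lceil\log_2 d_{\rm max}^i\rceil$ (note $i$ is settled only in Steps~12--15, \emph{after} it is sampled at its final iteration), reproducing the stated budget. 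Taking $B$ at least this value guarantees the budget guard is never tripped before {\sc nbdFound}$=[p]$, so the loop halts by finding all neighborhoods; and since $\widehat{N}(i)=N(i)$ for every $i$ on $\mathcal{E}$, the returned graph equals $G$. The delicate points to nail down are the exact interleaving of sampling (Steps~3--4) and settling (Steps~12--15) within one iteration — settling \emph{after} the batch is what keeps $\overline{N}(i)$ available when $i$ is identified — and the bound $\lceil\log_2 d_{\rm max}\rceil+1\le d_{\rm max}$ that lets the union bound close at level $\delta$.
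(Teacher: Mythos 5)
Your proposal is correct and follows essentially the same route as the paper's proof: the same deterministic dynamics on the success event (false positives at $\ell < d_i$ ruled out by the ``only if'' direction of (C2), exact recovery at the first $\ell \geq d_i$ via (C1), and neighbors kept unsettled because $v\in\overline{N}(i)$ forces $i\in\overline{N}(v)$ and hence $d^v_{\rm max}\geq d_i$), the same per-vertex accounting up to $\ell = 2^{\lceil\log_2 d^i_{\rm max}\rceil}$, and the same $p\cdot d_{\rm max}$ union-bound count. The only difference is packaging: the paper organizes the probability bound as a probabilistic induction, conditioning on success in iterations $1,\ldots,k-1$ and bounding $\mathbb{P}[\mathcal{E}_k^c\mid\mathcal{E}_1,\ldots,\mathcal{E}_{k-1}]$ per iteration (which also cleanly resolves the adaptivity of $F=\mbox{\sc settled}^c$, since each iteration uses fresh samples), whereas your single global good event is the same bound stated non-inductively and would need that same conditioning to be fully rigorous about which calls are ``actually made.''
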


Theorem~\ref{alg:meta} tells us that if the subroutines {\tt\bf nbdSelect} (resp. {\tt\bf nbdVerify}) satisfies condition (C1) (resp. (C2)) with probability exceeding $1 - \delta/2p d_{\rm max}$ for a fixed $\ell$ with $g(\ell)$ (resp. $h(\ell)$) samples, then Algorithm~\ref{alg:meta} has a TSC of $\sum_{i\in [p]}\sum_{k \leq \lceil \log_2d_{\rm max}^i \rceil} g(2^k)  + h(2^k)$ at a confidence level $ 1- \delta$. In what follows, we will show two specializations of Algorithm~\ref{alg:meta}. Our strategy to establish the TSC of these algorithms will be to first estimate the probability that {\tt\bf nbdSelect} and {\tt\bf nbdFound} fail to satisfy (C1) and (C2). This will suggest a choice for the functions $g()$ and $h()$, which can then be used to identify the total sample complexity.

\section{The AdPaCT Algorithm}
\label{sec:exhaustiveAlg}

We call our first instantiation of Algorithm~\ref{alg:meta}, the AdPaCT algorithm, which stands for \underline{A}daptive \underline{Pa}rtial \underline{C}orrelation \underline{T}esting. In this section, for the sake of simplicity and concreteness, we will assume that $X$ follows the $0$ mean multivariate normal distribution with a covariance matrix $\Sigma \in \mathbb{R}^{p\times p}$.  

First, we observe that since $X\sim \mathcal{N}\left( 0, \Sigma \right)$, the partial correlation coefficient contains all the conditional independence information of the distribution. In particular, for a pair of vertices $i, j \in [p]$, and for a subset $S \subseteq [p]$, the partial correlation coefficient $\rho_{ij|S}= 0$ if and only if  $X_i \independent X_j \mid X_S$ (except for a pathological, measure $0$ set of covariance matrices). As the name suggests, the \adpact{} algorithm uses estimates of these partial correlation coefficients in order to learn the graph represented by $\Sigma$. Recall that the partial correlation coefficient $\rho_{ij\mid S}$ satisfies the following recursive relationship which holds for any $k\in S$: 
\begin{equation}
{\rho}_{i,j\mid S} = \frac{{\rho}_{i,j\mid S\setminus\left\{ k \right\}} - {\rho}_{i,k\mid S\setminus \left\{ k \right\}}{\rho}_{j,k\mid S\setminus \left\{ k \right\}}}{\sqrt{\left( 1 - {\rho}^2_{i,k\mid S\setminus \left\{ k \right\}} \right)\left(1 -  {\rho}^2_{j,k\mid S\setminus \left\{ k \right\}} \right)}}. \label{eq:partialCorrelationRecursion}
\end{equation}
\hypertarget{para:empiricalPartialCorrelation}{In order to compute empirical estimates of these quantities one can begin with the natural empirical estimates of the correlation coefficients and substitute recursively in the above formula, or equivalently, one might invert the relevant sub-matrices of the empirical covariance matrix of the observed data. In what follows, we will write $\widehat{\rho}_{ij\mid S}$ to mean either of these estimates; our theoretical results hold for both.}  

We will now describe Algorithm~\ref{alg:exhaustiveAlg}.  The framework provided by Algorithm~\ref{alg:meta} will allow us to do this by prescribing choices for the subroutines {\tt\bf nbdSelect} and {\tt\bf nbdVerify} and the functions $g()$ and $h()$.  

\begin{algorithm}[h]
\caption{\adpact{}: Adaptive Partial Correlation Testing}
\label{alg:exhaustiveAlg}
\begin{algorithmic}[1]
\Require{Budget $B\in \mathbb{N}$, a constant $c>0$, and threshold $\xi>0$.}
\vspace{3mm}
\Statex \hspace{-6mm}\underline{Sample Complexity Functions}
\State $g(\ell) = \lceil c \ell \log p \rceil$, $h(\ell) = 0$.
\vspace{3mm}
\Statex \hspace{-6mm}\underline{{\tt\bf nbdSelect}$(i,\ell, \{X_F^{(j)}\}_{j\in S_1})$}
\For {$k = \ell/2+1, \ell/2 + 2, \ldots, \ell$}\Comment {ensure that $\ell$ is a power of 2}
\State $\mathcal{S} = \left\{ S\subseteq F : \left| S \right| = k, \max_{j\notin S}\left| \widehat{\rho}_{i,j\mid S} \right|\leq \xi \right\}$
\If{$\mathcal{S} = \emptyset$} {\bf continue } (i.e., go to Step 7)
\Else{ \Return{$\widehat{S} = \arg\min_{S\in \mathcal{S}} \max_{j\notin S}\left| \widehat{\rho}_{i,j\mid S} \right|$}}
\EndIf
\EndFor
\State \Return {$\emptyset$}
\vspace{3mm}
\Statex \hspace{-6mm}\underline{{\tt\bf nbdVerify}$(i,S, \{X_F^{(j)}\}_{j\in S_2})$}
\If{ $S\neq \emptyset$ } \Return {\bf true} \Else { \Return{\bf false}}
\EndIf
\end{algorithmic}
\end{algorithm}

We choose the sample complexity functions $g(\ell) = \lceil c \ell \log p \rceil$ and $h(\ell) = 0$. The {\tt\bf nbdSelect} subroutine exhaustively searches over all subsets $S\subseteq F$ (note that $F$ will be $[p]\setminus${\sc settled} when {\tt\bf nbdSelect} is called inside of Algorithm~\ref{alg:meta}) of cardinality between $\ell/2 +1$ and $\ell$ to find the smallest set $\widehat{S}$ which is such that  $\max_{j\notin \widehat{S}}\left| \widehat{\rho}_{i,j\mid \widehat{S}} \right|\leq \xi$. If such a set is not found, then the subroutine returns the empty set. 

Observe that {\tt\bf nbdSelect} on its own performs conditional independence tests to ensure that it is returning the right neighborhood. Therefore, {\tt\bf nbdVerify} simply returns {\bf true} unless $S = \emptyset$. It is worth observing here that the passive counterpart of this algorithm is a natural algorithm for Gaussian graphical model selection and indeed serves as the foundation for various algorithms in literature like the CCT algorithm of  \cite{anandkumar2012gaussian} and the PC algorithm (see e.g., \cite{spirtes2000causation}).

In order to theoretically characterize the performance of Algorithm~\ref{alg:exhaustiveAlg}, we need the following assumptions. 
\vspace{1mm}
\begin{enumerate}
\item[(A1)] The distribution of $X$ is faithful to $G$. 
\vspace{2mm}
\item[(A2)] For each $i,j\in [p]$ and $S\subseteq [p]\setminus \left\{ i,j \right\}$, $\left|\rho_{i,j\mid S}\right|\leq M$. And, for each $i,j\in [p]$ and $S\subseteq [p]$, if $X_i\not\independent X_j\mid X_S$, then $\left|\rho_{i,j\mid S}\right|\geq m$. 
\end{enumerate}
\vspace{1mm}

Assumption (A1) is a standard assumption in the graphical model selection literature and is violated only on a set of measure 0 (see e.g., \cite{spirtes2000causation, pearl2000causality}). Assumption (A2) has appeared in the literature (e.g., \cite{kalisch2007estimating}) as a way of strengthening the faithfulness assumption.  While the upper bound in assumption (A2) is a mild  regularity condition, the lower bound of (A2) may be hard to verify in practice. However, under certain parametric and structural conditions, one can obtain a handle on $m$. For example, the authors in \cite{anandkumar2012gaussian} show that if the underlying graph has small local separators and if the concentration matrix is \emph{walk-summable}, then $m$ in (A2) can be replaced essentially by  the smallest non-zero entry of the  concentration matrix. 

We can now state the following theorem about the performance of the AdPaCT algorithm. 

\begin{theorem}
\label{thm:exhaustiveAlg}
Fix $\delta\in (0,1)$ and suppose that assumptions (A1) and (A2) hold. Then, there exists a constant $c = c(m, M, \delta)$ such that if we set $g(\ell) = \lceil c \ell \log p \rceil$ and $\xi = m/2$, then with probability no less than $1  - \delta$, the following hold: 
\begin{enumerate}
\item The AdPaCT algorithm successfully recovers the graph $G$.
\item The computational complexity of the AdPaCT algorithm is no worse than $\mathcal{O}(p^{d_{\rm max} + 2})$
\end{enumerate}
This implies that the total sample complexity of the AdPaCT algorithm at confidence level $1 - \delta$ is bounded by $2c \bar{d}_{\rm max} p\log p$. 
\end{theorem}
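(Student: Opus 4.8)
The plan is to obtain Theorem~\ref{thm:exhaustiveAlg} as a corollary of the meta-result Theorem~\ref{thm:metaAlg}: I would verify that the \adpact{} instantiations of {\tt\bf nbdSelect}, {\tt\bf nbdVerify}, $g$, and $h$ satisfy conditions (C1) and (C2) at the prescribed failure level $\delta/2pd_{\rm max}$, and then read off both the correctness guarantee and the total-sample-complexity (TSC) formula from that theorem. Condition (C2) is essentially vacuous here: since {\tt\bf nbdSelect} already performs the conditional-independence tests internally, the design takes $h\equiv 0$ and lets {\tt\bf nbdVerify} return {\bf true} exactly when its argument is nonempty. Thus, conditioned on the (C1) event that {\tt\bf nbdSelect} returns the true neighborhood $N(i)$, the verification passes deterministically whenever $N(i)\neq\emptyset$, and no additional samples or failure probability are charged. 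All the analytic work therefore concentrates on (C1).

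For (C1), the key ingredient is a uniform concentration bound for the empirical partial correlations combined with a faithfulness argument. First I would invoke the standard Gaussian estimate (as in \cite{kalisch2007estimating}): under the boundedness in (A2) there are constants $C_1,C_2$ depending only on $M$ so that, from $n$ samples, $\mathbb{P}\left[\,|\widehat{\rho}_{i,j\mid S}-\rho_{i,j\mid S}|>m/2\,\right]\leq C_1 n\exp(-C_2 n (m/2)^2)$ for each fixed triple $(i,j,S)$. A single call to {\tt\bf nbdSelect}$(i,\ell,\cdot)$ touches at most $p\sum_{k\leq \ell}\binom{p}{k}\lesssim p^{\ell+1}$ such triples, so with $n=g(\ell)=\lceil c\ell\log p \rceil$ one can choose $c=c(m,M,\delta)$ large enough (so that the exponent $C_2 c\ell\log p(m/2)^2$ dominates $(\ell+1)\log p+\log(2pd_{\rm max}/\delta)$) to force the good event $\mathcal{A}_{i,\ell}=\{\,|\widehat{\rho}_{i,j\mid S}-\rho_{i,j\mid S}|\leq m/2 \text{ for all relevant } j,S\,\}$ to hold with probability at least $1-\delta/2pd_{\rm max}$; this furnishes the constant $c$ claimed in the statement. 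It then remains to show that, on $\mathcal{A}_{i,\ell}$ and at the scale where $d_i\in\{\ell/2+1,\dots,\ell\}$ (which, by the doubling of $\ell$ and the search window in {\tt\bf nbdSelect}, is exactly the scale at which $i$ is resolved), the exhaustive search returns $\widehat{S}=N(i)$. With $\xi=m/2$ I would argue two directions. \emph{Feasibility}: the Markov property gives $\rho_{i,j\mid N(i)}=0$ for every $j\notin\overline{N}(i)$, so on $\mathcal{A}_{i,\ell}$ the set $N(i)$ satisfies $\max_{j\notin N(i)}|\widehat{\rho}_{i,j\mid N(i)}|\leq m/2=\xi$ and is feasible. \emph{Exclusion}: any set $S$ with $S\not\supseteq N(i)$ contains a missing neighbor $j^{\ast}\in N(i)\setminus S$, and faithfulness (A1) together with the lower bound in (A2) gives $|\rho_{i,j^{\ast}\mid S}|\geq m$, whence $|\widehat{\rho}_{i,j^{\ast}\mid S}|\geq m/2=\xi$ on $\mathcal{A}_{i,\ell}$ and $S$ is rejected. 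Consequently no set of size smaller than $d_i$ is feasible (it cannot contain $N(i)$), so the smallest cardinality with a feasible set is $k=d_i$, and the \emph{unique} feasible set of that size is $N(i)$ itself; the argmin therefore returns $N(i)$, establishing (C1). The same exclusion argument shows that at every earlier stage $\ell<d_i$ the routine returns $\emptyset$, so $i$ is never placed in {\sc nbdFound} prematurely.

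Granting (C1) and (C2), Theorem~\ref{thm:metaAlg} yields correct recovery with probability at least $1-\delta$ (item 1) and a TSC of $\sum_{i\in[p]}\sum_{0\leq k\leq \lceil\log_2 d_{\rm max}^i\rceil}\bigl(g(2^k)+h(2^k)\bigr)$. Since $h\equiv 0$ and $g(2^k)=\lceil c2^k\log p \rceil$, summing the geometric series $\sum_{0\leq k\leq \lceil\log_2 d_{\rm max}^i\rceil}2^k=2^{\lceil\log_2 d_{\rm max}^i\rceil+1}-1=\Theta(d_{\rm max}^i)$ and then using $\sum_i d_{\rm max}^i=p\,\bar{d}_{\rm max}$ gives a TSC of order $c\log p\sum_{i}d_{\rm max}^i=c\,\bar{d}_{\rm max}\,p\log p$, which matches the stated $2c\,\bar{d}_{\rm max}\,p\log p$ up to the rounding induced by the ceiling. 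For the computational bound (item 2), I would count the dominant work, which occurs once $\ell\asymp d_{\rm max}$: for each of the $p$ vertices, {\tt\bf nbdSelect} enumerates $\binom{p}{\leq\ell}\leq p^{d_{\rm max}}$ candidate subsets, and for each subset evaluates $\max_j|\widehat{\rho}_{i,j\mid S}|$ over $O(p)$ indices, each a $\mathrm{poly}(d_{\rm max})$ matrix operation, giving $O(p^{d_{\rm max}+2})$ overall.

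I expect the main obstacle to be the exclusion half of the (C1) argument: one must verify on the good event that every size-$d_i$ set other than $N(i)$ is rejected and that no admissible smaller set slips through, which is precisely where faithfulness (A1) and the separation constant $m$ in (A2) are indispensable. The accompanying concentration bookkeeping that pins down $c=c(m,M,\delta)$ through a union bound over $\sim p^{\ell+1}$ subsets is technically routine but must be carried out at the stringent per-stage failure level $\delta/2pd_{\rm max}$ demanded by Theorem~\ref{thm:metaAlg}.
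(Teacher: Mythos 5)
Your proposal is correct and follows essentially the same route as the paper's proof: reduce to conditions (C1)/(C2) of Theorem~\ref{thm:metaAlg}, note that (C2) is vacuous since {\tt\bf nbdVerify} only checks nonemptiness and the conditional-independence testing is internal to {\tt\bf nbdSelect}, and establish (C1) via the concentration bound for empirical partial correlations (Lemma~\ref{lemma:partialCorrelationConcentration}) with a union bound over the $\lesssim p^{\ell+1}$ candidate sets, using faithfulness (A1) and the separation constant $m$ in (A2) with $\xi=m/2$ for exactly the feasibility/exclusion dichotomy the paper encodes through its non-separating sets $\mathcal{S}_{i,j,\ell}$ (your $j^{\ast}\in N(i)\setminus S$ being the edge case of a non-separating set), followed by the identical geometric-series and subset-enumeration calculations for the TSC and computational bounds. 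The only differences are cosmetic packaging --- you isolate a uniform good event $\mathcal{A}_{i,\ell}$ and then argue deterministically, whereas the paper bounds the failure events directly --- plus correct attention to the search window $(\ell/2,\ell]$ matching the conditioning in Theorem~\ref{thm:metaAlg}'s proof, so there is no substantive gap.
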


To prove this theorem, as mentioned earlier, we show that the choice for the subroutines {\tt\bf nbdSelect} and {\tt\bf nbdVerify} satisfy the conditions (C1) and (C2) of Theorem~\ref{thm:metaAlg} with high probability. We bound the event that {\tt\bf nbdSelect} fails in some iteration $\ell$ in terms of concentration inequalities for the partial correlation coefficient. This gives us a corresponding choice of $g(\ell)$ that determines the TSC of the AdPaCT algorithm. Please refer to  Appendix~\ref{sec:exhaustiveAlgProof} for the details of the proof.

It is clear that this procedure is advantageous over passive algorithms in situations where $d_{\rm max}$ is large compared to $\bar{d}_{\rm max}$. Unfortunately, in these settings which lend themselves to improved sample complexity, the computational complexity of the AdPaCT algorithm could be prohibitively large. In the next section, we will propose a different instantiation of Algorithm~\ref{alg:meta} based on the Lasso \cite{tibshirani1996regression} which achieves sample complexity savings similar to Algorithm~\ref{alg:exhaustiveAlg} whilst also enjoying vastly lower computational complexity. 
\begin{algorithm}[h]
\caption{\admapl{}: Adaptive Marginalization based Parallel Lasso}
\label{alg:LassoAlg}
\begin{algorithmic}[1]
\Require{Budget $B\in \mathbb{N}$, a constant $c>0$, and threshold $\xi>0$.}
\vspace{3mm}
\Statex \hspace{-6mm}\underline{Sample Complexity Functions}
\State $g(\ell) = h(\ell) = \lceil \ell c \log p\rceil$.
\vspace{3mm}
\Statex \hspace{-6mm}\underline{{\tt\bf nbdSelect}$(i,\ell, \{X_F^{(j)}\}_{j\in S_1})$}
\State Let $y\in \mathbb{R}^{\lceil\ell c\log p\rceil}$ be the vector of samples from $X_i$ in $\mathcal{S}_1$. 
\State Let $\mathbf{X}\in \mathbb{R}^{(\lceil \ell c\log p\rceil)\times (p-\left| \mbox{\sc settled} \right| -1)}$ be the corresponding matrix of samples from $X_{[p]\setminus\{\mbox{\sc settled}\cup \left\{ i \right\}\}}$. 
\State $\widehat{\beta}\leftarrow \mbox{\sc Lasso}(y, \mathbf{X})$
\If{ $\left| {\rm supp}(\widehat{\beta}) \right|> \ell$} \Return { top $\ell$ coordinates of $\left| \widehat{\beta} \right|$ }
\Else { \Return $\widehat{\beta}$}
\EndIf
\vspace{3mm}
\Statex \hspace{-6mm}\underline{{\tt\bf nbdVerify}$(i,S, \{X_F^{(j)}\}_{j\in S_2})$}
\If{ for each $j\in [p]\setminus F\cup S\cup \{i\}$, $\left|\widehat{\rho}_{i,j\mid S}\right|\leq \xi$ } \Return {\bf true} \Else { \Return{\bf false}}
\EndIf
\end{algorithmic}
\end{algorithm}

\section{The \admapl{} algorithm}
\label{sec:lassoAlgo}

In this section, we will discuss a computationally efficient active marginalization algorithm for learning graphs. As alluded to in Section~\ref{sec:exhaustiveAlg}, this algorithm uses Lasso as an efficient means for neighborhood selection and hence can be thought of as an active version of the seminal work of \cite{meinshausen2006high}. We call this algorithm \admapl{} for \underline{A}daptive \underline{M}arginalization based \underline{P}arallel \underline{L}asso.  

As in Section~\ref{sec:exhaustiveAlg}, we will describe the algorithm by prescribing choices for the subroutines {\tt\bf nbdSelect} and {\tt\bf nbdVerify} and the functions $g()$ and $h()$.

We choose the sample complexity functions to be $g(\ell) = h(\ell) = c\ell \log p$. {\tt\bf nbdSelect} operates as follows. Let $y$ denote the vector of samples corresponding to the random variable $X_i$ and let $\mathbf{X}$ denote the corresponding matrix of  samples from the the random variables $X_{F\setminus \left\{ i \right\}}$. The subroutine solves the following optimization program 
\begin{equation}
\widehat{\beta} = \underset{\beta\in \mathbb{R}^{\left| F \right|-1}}{\arg\min} \frac{1}{2 n_{i,\ell}}\left\| y - \mathbf{X}\beta \right\|_2^2 + \lambda_{\ell,i} \left\| \beta \right\|_1,
\end{equation}
where the choice of $\lambda_{\ell,i}$ is stated in Theorem~\ref{thm:lassoAlg} and $n_{i,\ell}$ is the number of samples (i.e., dimension of $y$). If the size of the support of $\widehat{\beta}$ is greater than $\ell$, then the algorithm returns the $\ell$ largest coordinates of $\left|\widehat{\beta}\right|$, else the algorithm returns the support of $\widehat{\beta}$. {\tt\bf nbdVerify} returns {\bf true} if $\left|\widehat{\rho}_{i,j\mid \widehat{N}(i)}\right|\leq \xi$ for every $j\in [p]\setminus(\mbox{\sc settled}\cup \widehat{N}(i)\cup\left\{ i \right\})$, else it returns {\bf false}.

Before we can state our theoretical results on the performance of the AMPL algorithm, we need to make some assumptions. For each $i\in [p]$, let ${\Sigma}^{\setminus i}$ denote the covariance matrix with the $i-$the row and column removed and set $S_i \triangleq {N}(i)$. We will assume the following conditions. 

\begin{enumerate}
\item[(A3)] There exists a constant $\gamma\in (0,1]$ such that for all $i\in [p]$,  $\widetilde{\Sigma}^i$ satisfies the following 
\begin{equation}
\vertiii{\widetilde{\Sigma}^i_{S^c_iS_i}\left(\widetilde{\Sigma}^i_{S_iS_i}\right)^{-1}}_\infty\leq 1 - \gamma
\end{equation}
\item[(A4)] For all $i\in [p]$, the covariance matrix $\widetilde{\Sigma}^i$ also satisfies 
\begin{align}
\Lambda_{\rm min}\left(\widetilde{\Sigma}^i_{S_iS_i}\right) &\geq C_{\rm min} > 0\\
\Lambda_{\rm max}\left(\widetilde{\Sigma}^i_{S_iS_i}\right) &\leq C_{\rm max} < +\infty
\end{align}
\end{enumerate}

Assumption (A3) is a kind of incoherence assumption often dubbed the \emph{irrepresentability condition}; similar assumptions have appeared in the literature for graphical model selection \cite{ravikumar2011high, ravikumar2010high, meinshausen2006high} and in the analysis of the Lasso \cite{van2009conditions}. Intuitively speaking, this restricts the influence that non-edge pairs of vertices have on the pairs of vertices that are edges. 

Assumption (A4) is a commonly imposed regularity  condition on the covariance matrix. 

We can now state the following theorem that characterizes the performance of the AMPL algorithm. 

\begin{theorem}
\label{thm:lassoAlg}
Fix $\delta >0$. Suppose that assumptions (A1)-(A4) hold. There exists constants $C_1, C_2, C_3$ which depend on $\Sigma, m, \delta$ such that if we set $c = C_1$ (i.e., $g(\ell) = c \ell \log p$), $\xi = m/2$, $\lambda_{\ell} = \sqrt{\frac{2 C_2 \left\| \Sigma \right\|_\infty}{C_1 \gamma^2}}$, and budget  $B= 2c\bar{d}_{\rm max} p\log p$, then with probability at least $1 - \delta$, the following hold
\begin{enumerate}
\item the AMPL algorithm successfully recovers the graph $G$, 
\item The computational complexity is bounded from above by $d_{\rm max}p \mathfrak{C}$, where $\mathfrak{C}$ is the computational cost of solving a single instance of Lasso,
\end{enumerate}
provided $m \geq \left( \frac{C_{\rm min}}{C_{\rm max}} + \frac{C_{\rm max}}{C_{\rm min}} + 2 \right)\times \frac{1}{4 \min_i \left| \Sigma_{ii} \right|} \left[ C_3 \sqrt{\frac{2C_1 \left\| \Sigma \right\|_\infty}{C_2 \gamma^2}}\max_{i} \vertiii{\left(\widetilde{\Sigma}^i_{N(i),N(i)}\right)^{-1/2}}^2_\infty + 20 \sqrt{\frac{\left\| \Sigma \right\|_\infty}{C_{\rm min} C_2}}\right]$.
 \end{theorem}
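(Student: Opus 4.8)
The plan is to apply Theorem~\ref{thm:metaAlg}: once the Lasso-based \texttt{nbdSelect} is shown to satisfy condition (C1) and the partial-correlation-based \texttt{nbdVerify} to satisfy (C2), each with failure probability at most $\delta/2pd_{\rm max}$ when supplied with $g(\ell)=h(\ell)=\lceil c\ell\log p\rceil$ samples, both correctness (claim~1) and the total sample complexity follow at once. The budget statement is then a direct evaluation of the sum $\sum_{i\in[p]}\sum_{0\le k\le\lceil\log_2 d^i_{\rm max}\rceil}\bigl(g(2^k)+h(2^k)\bigr)$ furnished by Theorem~\ref{thm:metaAlg}: since $g(2^k)+h(2^k)\approx 2c\,2^k\log p$, the inner geometric series sums to $\Theta(c\,d^i_{\rm max}\log p)$, and summing over $i$ with the defining identity $\sum_i d^i_{\rm max}=p\bar d_{\rm max}$ gives a total sample complexity of $\mathcal O(c\bar d_{\rm max}p\log p)$, which is the content of the budget choice $B=2c\bar d_{\rm max}p\log p$. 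Because the meta-algorithm draws the select-samples ($S_1$) and verify-samples ($S_2$) independently, I can analyze (C1) and (C2) separately and combine by a union bound.

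Verifying (C1) is the technical heart. Fix $i$ with $d_i\le\ell$ and $F\supseteq\overline N(i)$. Because $N(i)\subseteq F$, the Markov property forces the population least-squares predictor of $X_i$ from $X_{F\setminus\{i\}}$ to be supported exactly on $N(i)$, so recovering $N(i)$ is sign-consistent support recovery for a sparse Gaussian regression. I would invoke the primal--dual witness analysis of the Lasso (as in \cite{wainwright2009sharp}): under the irrepresentability condition (A3) and the eigenvalue bounds (A4), with $n=c\ell\log p$ observations and the prescribed $\lambda_\ell$, the Lasso recovers the correct signed support with probability $1-\exp(-\Theta(n\lambda_\ell^2))$, and $c=C_1$ drives this below $\delta/2pd_{\rm max}$. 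One subtlety deserves isolating: the regression is run on marginalized samples from $F$ rather than on the full vector. Here I would observe that restricting the design to any $F\supseteq\overline N(i)$ leaves the block $\widetilde\Sigma^i_{S_iS_i}$ (indexed by $S_i=N(i)\subseteq F$) unchanged, while the relevant $\vertiii{\cdot}_\infty$ in (A3) can only shrink when the row index set $S^c_i$ is restricted to $S^c_i\cap F$; hence (A3)--(A4) transfer verbatim to every marginalized subproblem.

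The delicate ingredient in (C1) is the minimum-signal (``beta-min'') requirement, and it is precisely here that the final lower bound on $m$ is born. I would relate the smallest nonzero regression coefficient $\beta_{\min}=\min_{j\in N(i)}|\beta_j|$ to the partial-correlation floor of (A2) through the identity $\beta_j=\rho_{i,j\mid[p]\setminus\{i,j\}}\sqrt{K_{jj}/K_{ii}}$, controlling the precision ratios by $C_{\rm min},C_{\rm max}$ and $\min_i|\Sigma_{ii}|$. Wainwright's recovery threshold, of the form $\beta_{\min}\gtrsim\lambda_\ell\vertiii{(\widetilde\Sigma^i_{N(i),N(i)})^{-1/2}}_\infty^2+(\text{stochastic term})$, then rearranges into the displayed inequality on $m$: the prefactor $\bigl(C_{\rm min}/C_{\rm max}+C_{\rm max}/C_{\rm min}+2\bigr)/4\min_i|\Sigma_{ii}|$ encodes the $\rho$-to-$\beta$ conversion, while the bracketed sum collects the Lasso threshold term ($C_3\sqrt{\cdots}$) and the verification slack discussed next.

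For (C2), if $N(i)\subseteq H$ then conditioning on the superset $H$ kills every tested partial correlation, $\rho_{i,j\mid H}=0$; if instead some neighbor $k\in N(i)\setminus H$ is missed, then $k\in F\setminus(H\cup\{i\})$ is among the tested vertices and faithfulness (A1) together with (A2) give $|\rho_{i,k\mid H}|\ge m$. With $\xi=m/2$ the test is therefore correct on the event $\max_j|\widehat\rho_{i,j\mid H}-\rho_{i,j\mid H}|<m/2$, which I would secure through a concentration bound for empirical partial correlations with $|H|\le\ell$ conditioning variables (e.g.\ the Fisher $z$-transform estimates of \cite{kalisch2007estimating}); its exponential tail, union-bounded over the at most $p$ tested vertices, is again controlled by $c=C_1$, and the summand $20\sqrt{\|\Sigma\|_\infty/C_{\rm min}C_2}$ is exactly the separation this step demands. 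The computational claim (claim~2) is then bookkeeping: $\ell$ doubles, so there are $\mathcal O(\log p)$ stages, each solving at most one Lasso per unsettled vertex with the cheaper partial-correlation checks of lower order, and a coarse count bounds the number of Lasso calls by $d_{\rm max}p$, giving $d_{\rm max}p\mathfrak C$. The step I expect to be the main obstacle is the bookkeeping inside (C1): simultaneously tracking the marginalization, the $\rho$-to-$\beta$ translation, and the constants $C_1,C_2,C_3$ so that the Lasso and partial-correlation thresholds combine into the single clean inequality on $m$.
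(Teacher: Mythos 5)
Your overall architecture coincides with the paper's proof: reduce to conditions (C1)/(C2) of Theorem~\ref{thm:metaAlg}, transfer (A3)--(A4) to the marginalized design by noting that covariance sub-blocks indexed by $N(i)$ are unchanged and the $\vertiii{\cdot}_\infty$ row maximum only shrinks under restriction of $S_i^c$ (this is exactly Claim~\ref{claim:covMatrixIsOkay}), convert the partial-correlation floor $m$ into a beta-min bound with the Kantorovich-type prefactor $\bigl(C_{\rm min}/C_{\rm max}+C_{\rm max}/C_{\rm min}+2\bigr)/\bigl(4\min_i|\Sigma_{ii}|\bigr)$, handle {\tt\bf nbdVerify} by partial-correlation concentration as in Theorem~\ref{thm:exhaustiveAlg}, and evaluate the budget sum $\sum_i\sum_{k\le\lceil\log_2 d^i_{\rm max}\rceil}(g(2^k)+h(2^k))$ geometrically. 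One pleasant variant: your Markov-property argument that the population regression of $X_i$ on $X_{F\setminus\{i\}}$ is supported on $N(i)$ (with coefficients independent of the choice of $F\supseteq\overline{N}(i)$) is a cleaner substitute for the paper's Schur-complement computation in Claim~\ref{claim:betaStarIsRight}, and it delivers the same coefficient identity needed for the beta-min step.

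There is, however, one genuine gap, and it sits at the exact point you flagged as routine: you invoke the support-recovery guarantee of \cite{wainwright2009sharp} off the shelf and assert a failure probability of order $\exp(-\Theta(n\lambda_\ell^2))$. That is not what the cited theorem provides. Wainwright's Theorem~3 guarantees signed-support recovery only with probability $1-c_1\exp\bigl(-c_2\min\{k,\log(p-k)\}\bigr)$, where $k=d_i$ is the sparsity; the bottleneck is his Lemma~5, an $\ell_\infty$ control of $\bigl[(n^{-1}W^TW)^{-1}-I_k\bigr]z$ whose tail decays only like $e^{-\Theta(k)}$. For a constant-degree vertex this is a \emph{constant} failure probability, so your union bound over $p$ vertices and $\lceil\log_2 d_{\rm max}\rceil$ stages cannot reach the required per-call confidence $1-\delta/2pd_{\rm max}$ --- and this breakdown occurs precisely in the regime the theorem targets, namely graphs with many low-degree vertices where $\bar{d}_{\rm max}\ll d_{\rm max}$. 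The paper devotes Theorem~\ref{thm:lassoSupportRecovery} and Lemma~\ref{lemma:waiLemma5} to repairing exactly this: a three-case sharpening of Wainwright's Lemma~5 (splitting on $k\le c_1^2/64$, $c_1^2/64<k\le\log(p-k)$, and $k>\log(p-k)$) yields a tail of order $\exp(-c_2\log(p-k))$, polynomial in $p$ uniformly in $k$, which is what makes $g(\ell)=C_1\ell\log p$ suffice for (C1) at confidence $1-\delta/2pd_{\rm max}$. Without this strengthening (or an equivalent one), your verification of (C1) fails for low-degree vertices, and with it the main claim of the theorem.
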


Again, we prove this theorem by showing that our choice for the subroutines {\tt\bf nbdSelect} and {\tt\bf nbdVerify} satisfy the conditions (C1) and (C2). The proof characterizing the behavior of {\tt\bf nbdVerify} is very similar to the proof of Theorem~\ref{thm:exhaustiveAlg}. On the other hand, to characterize the behavior of {\tt\bf nbdSelect}, one part of our reasoning is similar to the argument in \cite[Theorem 3]{wainwright2009sharp}. The rest of the proof follows from a strengthening of the argument in  \cite{wainwright2009sharp} for the case when the degree is $o(\log p)$. We also needed to be cognizant of the fact that our adaptive marginalization approach results in samples in different stages having different distributions.  We refer the interested reader to Appendix~\ref{sec:LassoAlgProof} for the details.

\section{SIMULATIONS}
\label{sec:simulations}
\begin{figure*}
\centering
\begin{minipage}[b]{0.4\textwidth}
\includegraphics[scale = 0.41]{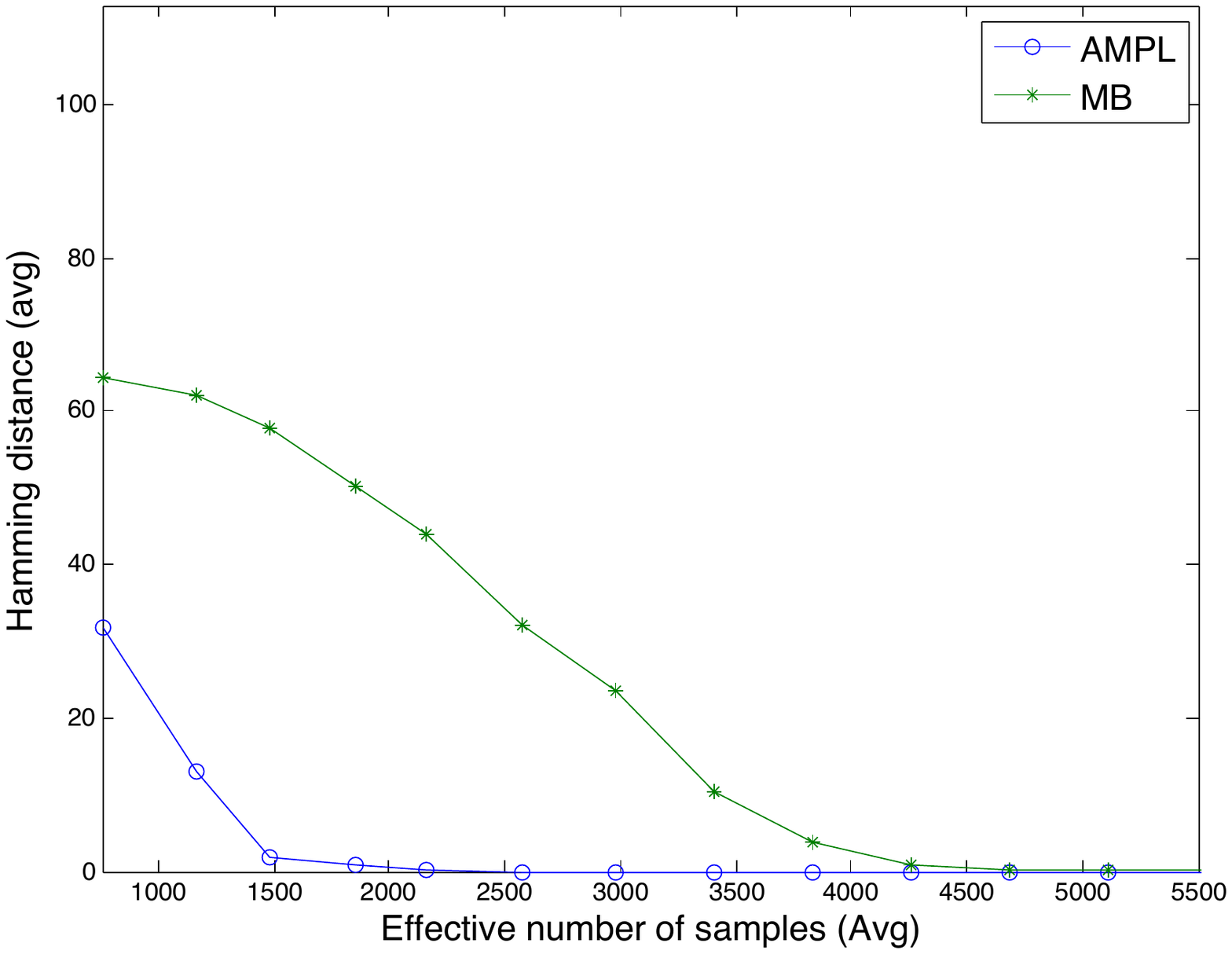}
\caption*{{Single Clique}}
\label{fig:singleClique}
\end{minipage}
\begin{minipage}[b]{0.4\textwidth}
\includegraphics[scale = 0.41]{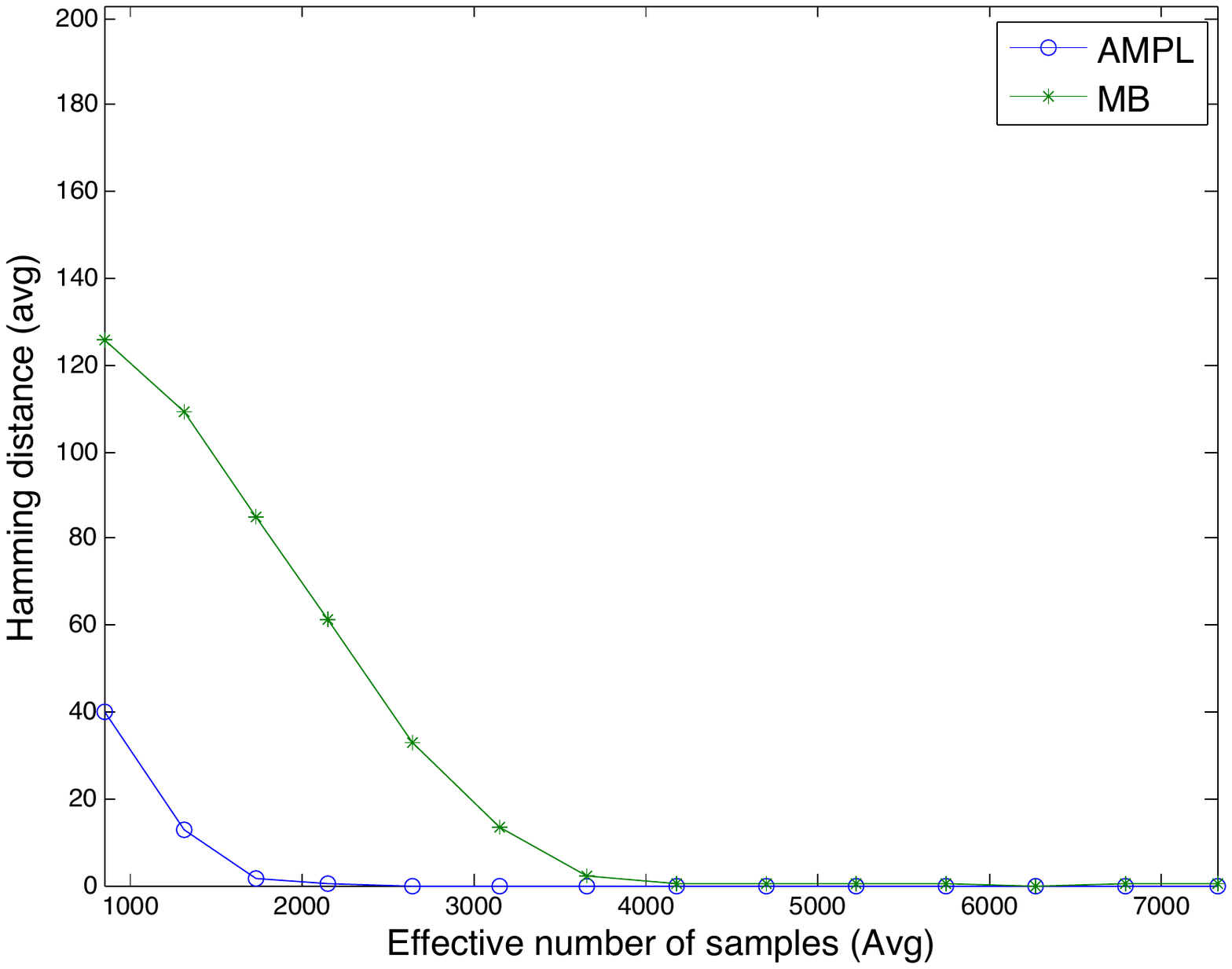}
\caption*{{Multiple Cliques}}
\label{fig:singleClique}
\end{minipage}
\vspace{4mm}
\begin{minipage}[b]{0.4\textwidth}
\includegraphics[scale = 0.41]{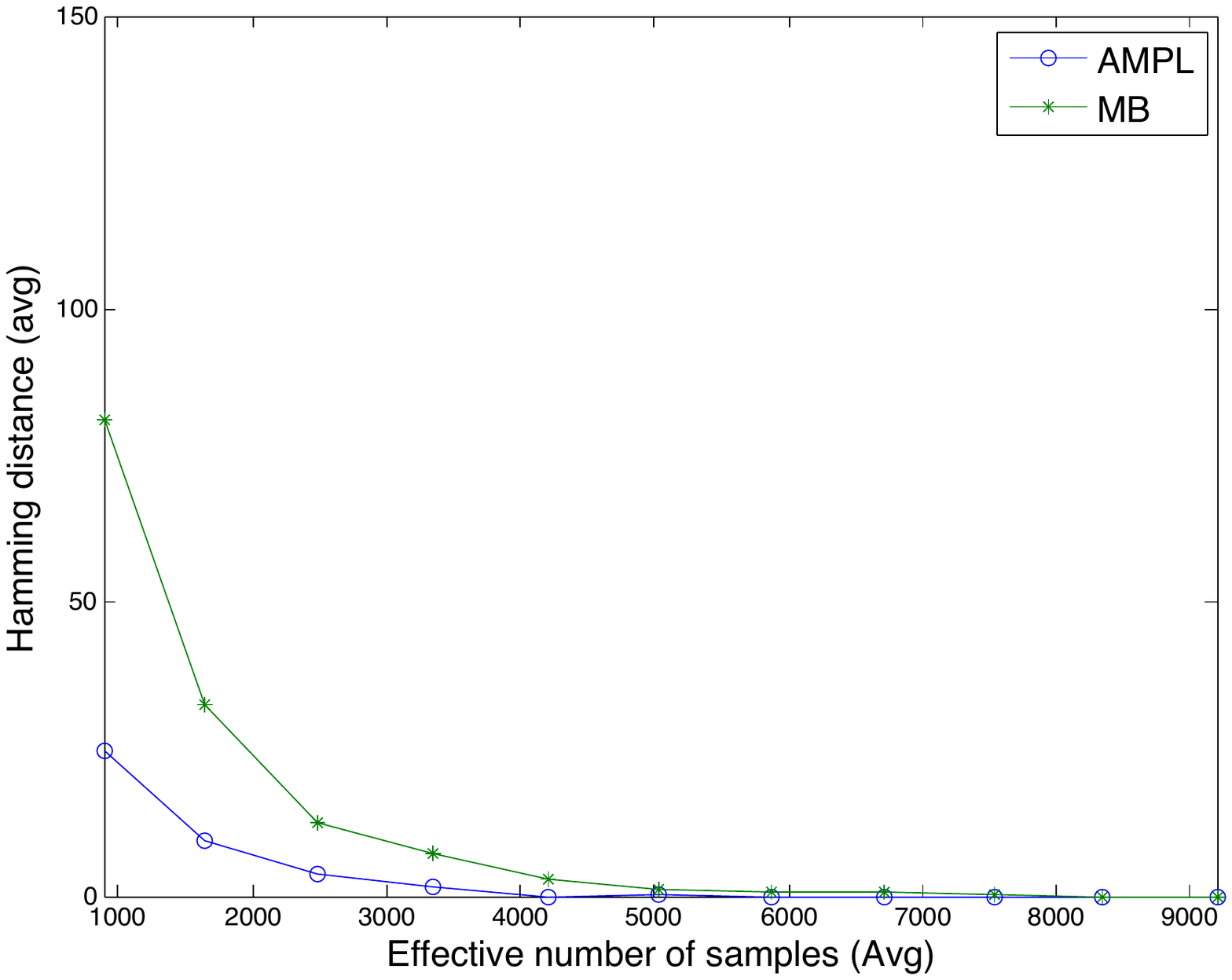}
\caption*{{Power Law}}
\label{fig:singleClique}
\end{minipage}
\caption{Plot of Effective number of Samples (total number of scalars/$p$) -v- Hamming Error}
\label{plot:hamming}
\end{figure*}

We will now describe some preliminary experimental results. In particular, we will focus on the computationally efficient AMPL algorithm (Algorithm~\ref{alg:LassoAlg}) and compare it to its natural passive counterpart, the neighborhood regression algorithm from \cite{meinshausen2006high} (henceforth, MB). We implemented each algorithm using the Glmnet package \cite{glmnet}, where we tuned the parameters so that each algorithm achieved the best model selection performance against the true model. 

We evaluated these algorithms on the following graphs: \\
(a) {\bf Single Clique : } a graph on $p=60$ vertices composed of a clique on $12$ vertices and a chain graph connecting the rest. Observe that here $d_{\rm max} = 11$ and $\bar{d}_{\rm max} = 3.8$. \\
(b) {\bf Multiple Cliques: } a graph  on $p=100$ vertices with 4 (disconnected) cliques of sizes $5, 8, 10, 11$ respectively; the rest of the graph is again connected with a chain. Here, $d_{\rm max} = 10$ and $\bar{d}_{\rm max} = 4.08$ \\
(c) {\bf Power Law: }a (random) power law graph on $p = 60$ vertices generated according to the Barab\'{a}si-Albert model with a randomly generate sparse seed graph on $5$ vertices. These graphs are such that the number of vertices of degree $x$ in the graph behaves like $a*x^{-b}$ for constants $a,b > 0$.  See \cite{albert2002statistical} for more on these graphs. For a typical graph we generated, we onserved that $d_{\rm max} = 13$ and $\bar{d}_{\rm max} = 3.68$. 

\begin{table}[]
\centering
\caption{Comparison of ESC (10 trials)}
\label{table:results}
\begin{tabular}{|l|cc|ll|}
\hline
\multicolumn{1}{|c|}{Graph} & \multicolumn{2}{c|}{ESC (0.9)}               & \multicolumn{2}{c|}{ESC(1)} \\ \cline{2-5} 
                            & AMPL & MB ( \cite{meinshausen2006high} )  & AMPL        & MB (\cite{meinshausen2006high} )          \\ \hline
Single Clique               & 1202.1       & 3361.8                          & 1202       & 3361.9       \\
Mult. Cliques            & 1154.3      & 2943.8                          & 2649.5       & 6216.1       \\
Power Law                   & 1280.2       & 2300.4                          & 4212.8       & 8004.7       \\ \hline
\end{tabular}
\end{table}

In Table~\ref{table:results}, we report the comparison between AMPL and MB. 
To run AMPL, we choose the constant $c$ (cf. Section~\ref{sec:lassoAlgo}) and allow the algorithm to run until it reports all vertices are settled. We then record $(i)$ the total number of scalar samples consumed by the algorithm divided by $p$; we call this the effective sample complexity, and $(ii)$ its model selection performance in terms of the hamming distance between the estimated and the true adjacency matrices. We then ran MB for various sample sizes and recorded its hamming error performance. In Table~\ref{table:results}, we report two numbers for each algorithm and each graph: (a) ESC(0.9): the average (over 10 trials) number of effective samples required to get atleast 90\% of the edges, and (b) ESC(1): the average number of  effective samples required to get all the edges right. Notice that ESC(1)$\times p$ is the total number of scalar samples consumed. 

As we can see from Table~\ref{table:results}, the AMPL algorithm clearly outperforms the neighborhood regression algorithm of Meinshausen and Buhlmann in the situations where the degree distribution of the true graphs is non-uniform. 

We also compare the ``average'' hamming error performance of these two algorithms in Figure~\ref{plot:hamming}. Since the total sample complexity of the AMPL algorithm is a random number, we adopt the following protocol to generate these plots: We first choose a value for the constant $c$ (cf. Section~\ref{sec:lassoAlgo}) and allow AMPL  to run to completion. We then compute the effective number of samples  by dividing the total number of scalars by the size of the graph ($p$). These many independent samples are then fed to MB and its hamming error performance is recorded. For each value of $c$, we repeat this 10 times and record the average effective number of samples and the average hamming error for both AMPL and MB. These results are shown in Figure~\ref{plot:hamming}.

\section{Discussion}
In this paper we introduced a generic framework for active graphical model selection. We then showed two instances of this algorithm: AdPaCT (Algorithm~\ref{alg:exhaustiveAlg}) and AMPL (Algorithm~\ref{alg:LassoAlg}) both of which, under certain standard assumptions, achieve a total sample complexity of $\mathcal{O}(\bar{d}_{\rm max} p \log p)$ as opposed to the total sample complexity of $\mathcal{O}(d_{\rm max} p \log p)$ of their passive counter parts. By definition, $\bar{d}_{\rm max}$ is no larger than $d_{\rm max}$. But in many situations, for instance when the graph has a highly non-uniform degree distribution, the difference between these quantities can be significant (see Section~\ref{sec:dbarmax} for more). Therefore, the algorithms presented in this paper provably perform better than their passive counterparts. Furthermore, in Section~\ref{sec:simulations}, we illustrate our theoretical findings via simulations. 

As is clear from Theorem~\ref{thm:metaAlg}, the framework we propose can also be extended beyond the Gaussian setting provided one can produce subroutines {\tt\bf nbdSelect} and {\tt\bf nbdVerify} that satisfy conditions (C1) and (C2). For instance, if one were to consider the reconstruction of Ising models, then $\ell_1$ regularized logistic regression can be used for {\tt\bf nbdSelect} as suggested by \cite{ravikumar2010high} and mutual information based conditional indpendence tests can be used for {\tt\bf nbdVerify}. We leave a careful analysis of this case to future work. Activizing non-neighborhood based approaches for graphical model selection like \cite{ravikumar2011high, anandkumar2012gaussian} is an interesting avenue for future work as well. Finally, it will also be of interest to show tight necessary conditions for such active marginalization based graph learning algorithms. 
\pagebreak
\bibliographystyle{ieeetr}
\bibliography{gmbib}

\begin{thebibliography}{10}

\bibitem{marvcenko1967distribution}
V.~A. Mar{\v{c}}enko and L.~A. Pastur, ``Distribution of eigenvalues for some
  sets of random matrices,'' {\em Sbornik: Mathematics}, vol.~1, no.~4,
  pp.~457--483, 1967.

\bibitem{johnstone2001distribution}
I.~M. Johnstone, ``On the distribution of the largest eigenvalue in principal
  components analysis,'' {\em Annals of statistics}, pp.~295--327, 2001.

\bibitem{ravikumar2010high}
P.~Ravikumar, M.~J. Wainwright, and J.~D. Lafferty, ``High-dimensional ising
  model selection using ℓ1-regularized logistic regression,'' {\em The Annals
  of Statistics}, vol.~38, no.~3, pp.~1287--1319, 2010.

\bibitem{ravikumar2011high}
P.~Ravikumar, M.~J. Wainwright, G.~Raskutti, and B.~Yu, ``High-dimensional
  covariance estimation by minimizing ℓ1-penalized log-determinant
  divergence,'' {\em Electronic Journal of Statistics}, vol.~5, pp.~935--980,
  2011.

\bibitem{anandkumar2012ising}
A.~Anandkumar, V.~Y. Tan, F.~Huang, A.~S. Willsky, {\em et~al.},
  ``High-dimensional structure estimation in ising models: Local separation
  criterion,'' {\em The Annals of Statistics}, vol.~40, no.~3, pp.~1346--1375,
  2012.

\bibitem{anandkumar2012gaussian}
A.~Anandkumar, V.~Y. Tan, F.~Huang, and A.~S. Willsky, ``High-dimensional
  gaussian graphical model selection: Walk summability and local separation
  criterion,'' {\em The Journal of Machine Learning Research}, vol.~13, no.~1,
  pp.~2293--2337, 2012.

\bibitem{keshri2013shotgun}
S.~Keshri, E.~Pnevmatikakis, A.~Pakman, B.~Shababo, and L.~Paninski, ``A
  shotgun sampling solution for the common input problem in neural connectivity
  inference,'' {\em arXiv preprint arXiv:1309.3724}, 2013.

\bibitem{turaga2013inferring}
S.~Turaga, L.~Buesing, A.~M. Packer, H.~Dalgleish, N.~Pettit, M.~Hausser, and
  J.~Macke, ``Inferring neural population dynamics from multiple partial
  recordings of the same neural circuit,'' in {\em Advances in Neural
  Information Processing Systems}, pp.~539--547, 2013.

\bibitem{bishop2014deterministic}
W.~E. Bishop and M.~Y. Byron, ``Deterministic symmetric positive semidefinite
  matrix completion,'' in {\em Advances in Neural Information Processing
  Systems}, pp.~2762--2770, 2014.

\bibitem{sachs2005causal}
K.~Sachs, O.~Perez, D.~Pe'er, D.~A. Lauffenburger, and G.~P. Nolan, ``Causal
  protein-signaling networks derived from multiparameter single-cell data,''
  {\em Science}, vol.~308, no.~5721, pp.~523--529, 2005.

\bibitem{meinshausen2006high}
N.~Meinshausen and P.~B{\"u}hlmann, ``High-dimensional graphs and variable
  selection with the lasso,'' {\em The Annals of Statistics}, pp.~1436--1462,
  2006.

\bibitem{balcan2006agnostic}
M.-F. Balcan, A.~Beygelzimer, and J.~Langford, ``Agnostic active learning,'' in
  {\em Proceedings of the 23rd international conference on Machine learning},
  pp.~65--72, ACM, 2006.

\bibitem{dasgupta2007general}
S.~Dasgupta, C.~Monteleoni, and D.~J. Hsu, ``A general agnostic active learning
  algorithm,'' in {\em Advances in neural information processing systems},
  pp.~353--360, 2007.

\bibitem{castro2008minimax}
R.~M. Castro and R.~D. Nowak, ``Minimax bounds for active learning,'' {\em
  Information Theory, IEEE Transactions on}, vol.~54, no.~5, pp.~2339--2353,
  2008.

\bibitem{beygelzimer2010agnostic}
A.~Beygelzimer, J.~Langford, Z.~Tong, and D.~J. Hsu, ``Agnostic active learning
  without constraints,'' in {\em Advances in Neural Information Processing
  Systems}, pp.~199--207, 2010.

\bibitem{hanneke2014theory}
S.~Hanneke, ``Theory of disagreement-based active learning,'' {\em Foundations
  and Trends in Machine Learning}, vol.~7, no.~2-3, pp.~131--309, 2014.

\bibitem{arias2013fundamental}
E.~Arias-Castro, E.~J. Candes, M.~Davenport, {\em et~al.}, ``On the fundamental
  limits of adaptive sensing,'' {\em Information Theory, IEEE Transactions on},
  vol.~59, no.~1, pp.~472--481, 2013.

\bibitem{malloy2014near}
M.~L. Malloy and R.~D. Nowak, ``Near-optimal adaptive compressed sensing,''
  {\em Information Theory, IEEE Transactions on}, vol.~60, no.~7,
  pp.~4001--4012, 2014.

\bibitem{haupt2009distilled}
J.~Haupt, R.~Castro, and R.~Nowak, ``Distilled sensing: Selective sampling for
  sparse signal recovery,'' in {\em International Conference on Artificial
  Intelligence and Statistics}, pp.~216--223, 2009.

\bibitem{eriksson2011active}
B.~Eriksson, G.~Dasarathy, A.~Singh, and R.~Nowak, ``Active clustering: Robust
  and efficient hierarchical clustering using adaptively selected
  similarities,'' {\em arXiv preprint arXiv:1102.3887}, 2011.

\bibitem{krishnamurthy2012efficient}
A.~Krishnamurthy, S.~Balakrishnan, M.~Xu, and A.~Singh, ``Efficient active
  algorithms for hierarchical clustering,'' {\em arXiv preprint
  arXiv:1206.4672}, 2012.

\bibitem{voevodski2012active}
K.~Voevodski, M.-F. Balcan, H.~R{\"o}glin, S.-H. Teng, and Y.~Xia, ``Active
  clustering of biological sequences,'' {\em The Journal of Machine Learning
  Research}, vol.~13, no.~1, pp.~203--225, 2012.

\bibitem{ailon2013breaking}
N.~Ailon, Y.~Chen, and X.~Huan, ``Breaking the small cluster barrier of graph
  clustering,'' {\em arXiv preprint arXiv:1302.4549}, 2013.

\bibitem{vats2014active}
D.~Vats, R.~D. Nowak, and R.~G. Baraniuk, ``Active learning for undirected
  graphical model selection,'' in {\em Proceedings of the 17th International
  Conference on Artificial Intelligence and Statistics (AISTATS)}, JMLR : W\&P
  volume 33, 2014.

\bibitem{tong2000active}
S.~Tong and D.~Koller, ``Active learning for parameter estimation in bayesian
  networks,''

\bibitem{tong2001active}
S.~Tong and D.~Koller, ``Active learning for structure in bayesian networks,''

\bibitem{murphy2001active}
K.~P. Murphy, ``Active learning of causal bayes net structure,'' {\em Technical
  Report, UC Berkeley}, 2001.

\bibitem{yuan2007model}
M.~Yuan and Y.~Lin, ``Model selection and estimation in the gaussian graphical
  model,'' {\em Biometrika}, vol.~94, no.~1, pp.~19--35, 2007.

\bibitem{yuan2010high}
M.~Yuan, ``High dimensional inverse covariance matrix estimation via linear
  programming,'' {\em The Journal of Machine Learning Research}, vol.~11,
  pp.~2261--2286, 2010.

\bibitem{xue2012nonconcave}
L.~Xue, H.~Zou, T.~Cai, {\em et~al.}, ``Nonconcave penalized composite
  conditional likelihood estimation of sparse ising models,'' {\em The Annals
  of Statistics}, vol.~40, no.~3, pp.~1403--1429, 2012.

\bibitem{kalisch2007estimating}
M.~Kalisch and P.~B{\"u}hlmann, ``Estimating high-dimensional directed acyclic
  graphs with the pc-algorithm,'' {\em The Journal of Machine Learning
  Research}, vol.~8, pp.~613--636, 2007.

\bibitem{lauritzen1996graphical}
S.~L. Lauritzen, {\em Graphical models}.
\newblock Oxford University Press, 1996.

\bibitem{wainwright2009sharp}
M.~J. Wainwright, ``Sharp thresholds for high-dimensional and noisy sparsity
  recovery using-constrained quadratic programming (lasso),'' {\em Information
  Theory, IEEE Transactions on}, vol.~55, no.~5, pp.~2183--2202, 2009.

\bibitem{spirtes2000causation}
P.~Spirtes, C.~N. Glymour, and R.~Scheines, {\em Causation, prediction, and
  search}, vol.~81.
\newblock MIT press, 2000.

\bibitem{wang2010information}
W.~Wang, M.~J. Wainwright, and K.~Ramchandran, ``Information-theoretic bounds
  on model selection for gaussian markov random fields,'' in {\em Information
  Theory Proceedings (ISIT), 2010 IEEE International Symposium on},
  pp.~1373--1377, IEEE, 2010.

\bibitem{chan2007approximation}
T.-H.~H. Chan, {\em Approximation algorithms for bounded dimensional metric
  spaces}.
\newblock ProQuest, 2007.

\bibitem{slivkins2007distance}
A.~Slivkins, ``Distance estimation and object location via rings of
  neighbors,'' {\em Distributed Computing}, vol.~19, no.~4, pp.~313--333, 2007.

\bibitem{pearl2000causality}
J.~Pearl, {\em Causality: models, reasoning and inference}, vol.~29.
\newblock Cambridge Univ Press, 2000.

\bibitem{tibshirani1996regression}
R.~Tibshirani, ``Regression shrinkage and selection via the lasso,'' {\em
  Journal of the Royal Statistical Society. Series B (Methodological)},
  pp.~267--288, 1996.

\bibitem{van2009conditions}
S.~A. Van De~Geer, P.~B{\"u}hlmann, {\em et~al.}, ``On the conditions used to
  prove oracle results for the lasso,'' {\em Electronic Journal of Statistics},
  vol.~3, pp.~1360--1392, 2009.

\bibitem{glmnet}
J.~Qian, T.~Hastie, J.~Friedman, R.~Tibshirani, and N.~Simon, ``Glmnet for
  matlab (2013),''

\bibitem{albert2002statistical}
R.~Albert and A.-L. Barab{\'a}si, ``Statistical mechanics of complex
  networks,'' {\em Reviews of modern physics}, vol.~74, no.~1, p.~47, 2002.

\bibitem{gubner2006probability}
J.~A. Gubner, {\em Probability and random processes for electrical and computer
  engineers}.
\newblock Cambridge University Press, 2006.

\bibitem{golub2012matrix}
G.~H. Golub and C.~F. Van~Loan, {\em Matrix computations}, vol.~3.
\newblock JHU Press, 2012.

\bibitem{horn2012matrix}
R.~A. Horn and C.~R. Johnson, {\em Matrix analysis}.
\newblock Cambridge university press, 2012.

\bibitem{davidson2001local}
K.~R. Davidson and S.~J. Szarek, ``Local operator theory, random matrices and
  banach spaces,'' {\em Handbook of the geometry of Banach spaces}, vol.~1,
  pp.~317--366, 2001.

\end{thebibliography}

\appendices
\section{Proof of Theorem~\ref{thm:metaAlg}}
\label{sec:proofOfTheorem1}

We will restate the theorem here for convenience. 
\newtheorem*{thm1}{\bf Theorem 1}
\begin{thm1}
Fix $\delta \in (0,1)$. For each $\ell \leq d_{\rm max}$, assume that the subroutines {\tt\bf nbdSelect} and {\tt\bf nbdVerify} satisfy: 
\begin{enumerate}
\item[(C1)] For any vertex $i\in [p]$ and subset $F\subseteq [p]$ that are such that $\left| N(i) \right| = d_i \leq \ell$ and $\overline{N}(i) \subseteq F$, the following holds. Given $g(\ell)$ samples from $X_F$, {\tt\bf nbdSelect}$(i, \ell, \left\{ X_F^{(j)} \right\}_{j\in S_1})$ returns the true neighborhood of $i$ with probability greater than $1-\delta/2p d_{\rm max}$. 
\item[(C2)] For any vertex $i\in [p]$ and subsets $F, H\subseteq [p]$ that are such that $\left| N(i) \right| = d_i \leq \ell$, $\overline{N}(i) \subseteq F$, and $H\subseteq F$, the following holds. Given $h(\left| H \right|)$ samples from $X_F$, {\tt\bf nbdVerify} $(i,H,\left\{ X_F^{(j)} \right\})$ returns {\bf true} if and only if $N(i)\subseteq H$ with probability greater than $1-\delta/2p d_{\rm max}$. 
\end{enumerate}
Then, with probability no less than $1-\delta$, Algorithm~\ref{alg:meta} returns the correct graph. Furthermore, it suffices if $B\geq \sum_{i\in [p]}\sum_{0\leq k \leq \lceil \log_2d_{\rm max}^i \rceil} g(2^k)  + h(2^k))$. That is, Algorithm~\ref{alg:meta} has a total sample complexity of $\sum_{i\in [p]}\sum_{0\leq k \leq \lceil \log_2d_{\rm max}^i \rceil} g(2^k)  + h(2^k)$ at confidence level $1-\delta$. 
\end{thm1}

\begin{proof}
To prove this theorem, we will use a simple argument that can be thought of as a proof by probabilistic induction. Towards this end, we will let $\mathcal{E}_k$ be the event that Algorithm~\ref{alg:meta} succeeds at iteration number $k$. Notice that $k$ takes values in the set $\left\{ 1,2,\ldots, \lfloor\log_2 (2p)\rfloor \right\}$ since the algorithm terminates when the (doubling) counter satisfies $\ell = 2^{k-1} \geq 2p$. We can characterize the event $\mathcal{E}_k$ as follows: 
\begin{itemize}
\item For each $i\in [p]\setminus \mbox{\sc nbdFound}$, if $d_i = \left| N(i) \right|\leq 2^{k-1}$, then $\widehat{N}(i)$, the output of \\{\tt nbdSelect}$( i, \ell, \{X_{[p]\setminus\mbox{\sc settled}}^{(j)}\}_{j\in S_1})$ is exactly $N(i)$, and {\tt nbdVerify}$( i,\widehat{N}(i), \left\{ X_{[p]\setminus \mbox{\tiny{\sc settled}}} \right\}_{j\in S_2})$ outputs {\bf true}. 
\item For each $i\in [p]\setminus \mbox{\sc nbdFound}$, if $d_i > 2^{k-1}$, then {\tt nbdVerify}$( i,\widehat{N}(i), \left\{ X_{[p]\setminus \mbox{\tiny{\sc settled}}} \right\}_{j\in S_2})$ outputs {\bf false}. 
\end{itemize} 

We begin our proof by bounding the probability of error from above as follows. 
\begin{align}
\mathbb{P}\left[ \mbox{error} \right] &= \mathbb{P}\left[ \bigcup_{k = 1}^{\lfloor \log_2 (2p) \rfloor} \mathcal{E}_k^c \right]\nonumber\\
&\leq \sum_{k = 1}^{\lfloor \log_2 (2p) \rfloor} \mathbb{P}\left[ \mathcal{E}_{k}^c\middle | \mathcal{E}_1,\ldots, \mathcal{E}_{k - 1} \right], \label{eq:theorem1UnionBound}
\end{align}

where we have used the convention that $\mathcal{E}_1,\ldots, \mathcal{E}_{k - 1} = \emptyset$ for $k = 1$. In what follows, fix an arbitrary $k\in [\lfloor \log_2(2p) \rfloor]$ and set $\ell = 2^{k-1}$; we will bound the probability $\mathbb{P}\left[ \mathcal{E}_k^c \middle| \mathcal{E}_1, \ldots, \mathcal{E}_{k-1} \right]$.

First, observe that conditioned on $\mathcal{E}_1,\ldots, \mathcal{E}_{k - 1}$, the following is true of the (evolving) sets {\sc nbdFound} and {\sc settled}. If a vertex $i\in [p]$ is such that $d_i = \left| N(i) \right| \leq \ell/2$, then {\sc nbdFound} contains $i$, and similarly if every $j\in N(i)$ is such that $d_j \leq \ell/2$, then {\sc settled} contains $i$. Next, we observe that since we are conditioning on $\mathcal{E}_1,\ldots, \mathcal{E}_{k - 1}$, the following statements hold provided $\ell \leq d_{\rm max}$: 
\begin{itemize}
\item If $i$ is such that $d_i \leq \ell$, then $\overline{N}(i) \cap \mbox{\sc settled} = \emptyset$, since each $j\in N(i)$ has at least one neighbor (viz., $i$) has not been enrolled in {\sc nbdFound}. Therefore, by (C1), $\widehat{N}(i)$, the output of \\{\tt nbdSelect}$(i, \ell, \left\{ X^{(j)}_{[p]\setminus \mbox{\sc settled}} \right\}_{j \in S_1})$ is exactly  $N(i)$ with probability at least $ 1- \delta/2p d_{\rm max}$. 
\item For such an $i$ (i.e., with  $d_i \leq \ell$), {\tt nbdSelect}$(i, \widehat{N}(i), \{ X^{(j)}_{[p]\setminus \mbox{\sc settled}}\}_{j \in S_2})$ returns {\bf true}. On the other hand, if $i$ is such that $d_i > \ell$,  the subroutine {\tt nbdVerify}$(i, \widehat{N}(i), \{ X^{(j)}_{[p]\setminus \mbox{\sc settled}} \}_{j \in S_2})$ returns {\bf false} since $\left| \widehat{N}(i) \right|\leq \ell$ by definition of the {\tt nbdSelect} function. Both these follow from (C2) and with probability at least $1 - \delta/2p d_{\rm max}$.
\end{itemize}
Both these observations together imply that for any $k$ such that $2^{k-1} \leq d_{\rm max}$, $\mathbb{P}\left[ \mathcal{E}_k^c\middle | \mathcal{E}_1, \ldots, \mathcal{E}_{k-1} \right] \leq p/d_{\rm  max}$; observe that we have (quite conservatively) bounded this event by using $p$ as an upper bound to the number of vertices whose degrees do not exceed $2^{k-1}$.  On the other hand, if $2^{k-1} > d_{\rm max}$, by observations made above, $\mbox{\sc nbdFound} = [p]$. Therefore, Algorithm~\ref{alg:meta} would have terminated before $k$ reached such a value and $\mathbb{P}\left[ \mathcal{E}_k^c\middle | \mathcal{E}_1, \ldots, \mathcal{E}_{k-1} \right] = 0$. Therefore, from \eqref{eq:theorem1UnionBound}, we have that the probability that the algorithm errs is no more than $\delta$, as required.  

Finally, observe that the above argument implies that with probability greater $1 - p d_{\rm max} \delta$, the following is true. Each vertex $i\in [p]$ is enrolled in {\sc nbdFound} no later than when the counter reaches $\ell = 2^{\lceil \log_2 d_i\rceil} \leq 2 d_i$. 
Therefore, by the time  $\ell$ reaches $2d_{\rm max}^i$, every neighbor of $i$ has already been enrolled in {\sc nbdFound}, which of course implies that $i$ is enrolled in {\sc settled} and is no longer sampled from. Therefore, the total number of samples accumulated for vertex $i$ is given by $\sum_{k=0}^{\lceil \log_2d_{\rm max}^i \rceil} g(2^k)  + h(2^k)$. This implies that a budget $B\geq \sum_{i\in [p]}\sum_{0\leq k \leq \lceil \log_2d_{\rm max}^i \rceil} g(2^k)  + h(2^k)$ is sufficient. 
\end{proof}

\section{Proof of Theorem~\ref{thm:exhaustiveAlg}}
\label{sec:exhaustiveAlgProof}
We will restate Theorem~\ref{thm:exhaustiveAlg} here for convenience. 
\newtheorem*{thm2}{\bf Theorem 2}

\begin{thm2}
Fix $\delta\in (0,1)$ and suppose that assumptions (A1) and (A2) hold. Then, there exists a constant $c = c(m, M, \delta)$ such that if we set $g(\ell) = \lceil c \ell \log p \rceil$ and $\xi = m/2$, then with probability no less than $1  - \delta$, the following hold: 
\begin{enumerate}
\item The AdPaCT algorithm successfully recovers the graph $G$.
\item The computational complexity of the AdPaCT algorithm is no worse than $\mathcal{O}(p^{d_{\rm max} + 2})$
\end{enumerate}
This implies that the total sample complexity of the AdPaCT algorithm at confidence level $1 - \delta$ is bounded by $2c \bar{d}_{\rm max} p\log p$. 
\end{thm2}

\begin{proof}To prove Theorem~\ref{thm:exhaustiveAlg}, we will bound the probability that the conditions (C1) and (C2) are violated. 

First, fix an arbitrary $\ell \leq d_{\rm max}$, a vertex $i\in [p]$, and a subset $F\subseteq [p]$ such that $d_i \leq \ell$ and $\overline{N}(i)\subseteq F$. For ease of notation, we will let $n_\ell = g(\ell)$. The event that the {\bf nbdVerify} subroutine defined in Algorithm~\ref{alg:exhaustiveAlg} does not satisfy the condition (C1) is equivalent to saying that there is a set $S\subseteq F$ such that $\left| S \right|\leq \ell$ and $\max_{j\notin S\cup\{i\}}\left| \widehat{\rho}_{i,j\mid S} \right|\leq \xi$ and one of the following events hold: 
\begin{itemize}
\item $\max_{j\notin \overline{N}(i)}\left| \widehat{\rho}_{i,j\mid N(i)} \right| > \xi$
\item $\left| S \right|	 < d_i$
\item $\left| S \right| = d_i$ and $\max_{j\notin S\cup\{i\}}\left| \widehat{\rho}_{i,j\mid S} \right| \leq \max_{j\notin \overline{N}(i)}\left| \widehat{\rho}_{i,j\mid N(i)} \right|$.
\end{itemize}
Letting $\mathcal{S}_{i,j,\ell}$ denote the set of all sets of size at most $\ell$ that {\bf do not} separate\footnote{A set $S$ is said to separate a pair of vertices $i$ and $j$ in a graph if all the paths that connect $i$ and $j$ contain at least one vertex from $S$.} $i$ from $j$ in the graph $G$, observe that above events imply that one or both of the following conditions hold: (a) there exists a vertex $j\in [p]\setminus\{i\}$ and a subset $S\subseteq \mathcal{S}_{i,j,\ell}$ such that $\left| \widehat{\rho}_{i,j\mid S} \right|\leq \xi$, or (b) there is a vertex $j\in [p]\setminus \overline{N}(i)$: $\left| \widehat{\rho}_{i,j\mid N(i)} \right| > \xi$. Therefore, we will bound the probability that (C1) does not hold, an event we will dub $\mathcal{E}_1$, as follows 
\begin{align}
\mathbb{P}\left[ \mathcal{E}_1 \right] &\leq \sum_{\substack{j\in [p]\setminus \{i\}\\ S\in \mathcal{S}_{i,j,\ell}}}\mathbb{P}\left[ \left| \widehat{\rho}_{i,j\mid S} \right| \leq \xi \right] + \sum_{j\in [p]\setminus \overline{N}(i)} \mathbb{P}\left[ \left| \widehat{\rho}_{i,j\mid N(i)} \right| > \xi \right]. \label{eq:exhaustiveAlgoE1}
\end{align}

To proceed, let us consider an arbitrary term in the first sum. Since $S\in \mathcal{S}_{i,j,\ell}$, we know by the second part of assumption (A2) that $\left| \rho_{i,j \mid S} \right| > m$. Now, observe that $\left| \widehat{\rho}_{i,j\mid S} \right|\leq \xi$ and $\left| \rho_{i,j\mid S} \right| \geq m$ together imply that $\left| \rho_{i,j\mid S} \right| - \left| \widehat{\rho}_{i,j\mid S}\right| \geq m - \xi \Rightarrow \left| \rho_{i,j\mid S} - \widehat{\rho}_{i,j\mid S} \right| \geq m - \xi$, since $m > \xi$. Therefore, in this case, we have
\begin{align}
\mathbb{P}\left[ \left| \widehat{\rho}_{i,j\mid S} \right| \leq \xi  \right] &\leq \mathbb{P}\left[ \left| \widehat{\rho}_{i,j\mid S} - \rho_{i,j\mid S} \right| \geq m-\xi  \right]\\
&\stackrel{(a)}{\leq} C_1\left( n_\ell - 2 - \left| S \right| \right)\exp\left\{ - \left( n_\ell - 4 - \left| S \right| \right)\log\left( \frac{4+\left( m-\xi \right)^2}{ 4 - (m-\xi)^2 } \right) \right\}\\
&\leq  C_1n_\ell\exp\left\{ - \left( n_\ell- 4 - \ell \right)\log\left( \frac{16+m^2}{16 - m^2 } \right) \right\} \label{eq:conditionallyDependent}, 
\end{align}
where $(a)$ follows from Lemma~\ref{lemma:partialCorrelationConcentration} in Appendix~\ref{sec:lemmata} and as in the lemma, the constant $C_1$ only depends on $M$. The last step follows from the condition that $\left| S \right|\leq \ell$ and $\xi = m/2$. 

Next, we will consider an arbitrary term in the second summation of \eqref{eq:exhaustiveAlgoE1}. Since $j\notin \overline{N}(i)$, we know by assumption (A1), $\rho_{i,j\mid N(i)} = 0$. Therefore, 
\begin{align}
\mathbb{P}\left[ \left| \widehat{\rho}_{i,j\mid S} \right| \geq \xi  \right] &= \mathbb{P}\left[ \left| \widehat{\rho}_{i,j\mid S} - \rho_{i,j \mid S}\right| \geq \xi  \right]\\
&\stackrel{(a)}{\leq} C_1 \left( n_\ell - 2 - \left| S \right| \right) \exp\left\{  - \left( n_\ell  - 4 - \left| S \right| \right)\log\left( \frac{4 + \xi^2}{4 - \xi^2} \right)  \right\}\\
& \leq C_1 n_\ell \exp\left\{  - \left( n_\ell  - 4 - \ell \right)\log\left( \frac{16 + m^2}{16 - m^2} \right)  \right\}, \label{eq:condionallyIndependent}
\end{align}
where, again, $(a)$ follows from Lemma~\ref{lemma:partialCorrelationConcentration} in Appendix~\ref{sec:lemmata} and the last step follows after observing that $\left| S \right|\leq\ell$ and $\xi = m/2$. So, from \eqref{eq:exhaustiveAlgoE1}, we have the following upper bound on the probability that (C1) is violated: 
\begin{align}
\mathbb{P}\left[ \mbox{(C1) is violated for a fixed $i$} \right] &\leq \sum_{\substack{j\in [p]\setminus \{i\}\\ S\in \mathcal{S}_{i,j,\ell}}}\mathbb{P}\left[ \left| \widehat{\rho}_{i,j\mid S} \right| \leq \xi \right] + \sum_{j\in [p]\setminus \overline{N}(i)} \mathbb{P}\left[ \left| \widehat{\rho}_{i,j\mid N(i)} \right| > \xi \right]\\
&\leq 2C_1 p^{\ell + 1} n_\ell \exp\left\{  - \left( n_\ell  - 4 - \ell \right)\log\left( \frac{16 + m^2}{16 - m^2} \right)  \right\},
\end{align}
where the last step follows from observing that there are no more than $p^{\ell + 1}$ terms in the first sum and $p$ terms in the second sum. As observed in Section~\ref{sec:exhaustiveAlg}, (C2) is satisfied by default for Algorithm~\ref{alg:exhaustiveAlg} since the verification is implicitly performed by the exhaustive searching of the {\bf ndbSelect} subroutine. Therefore, if the number of samples $n_\ell = g(\ell)$ satisfies
\begin{equation}
n_\ell = \left\lceil\ell + 5 + \frac{1}{\log\left( \frac{16 + m^2}{16 - m^2} \right)} \left((\ell + 5)\log (2C_1 p) + \log(2/\delta)\right)	\right\rceil,
\end{equation}
then the following implications hold
\begin{align}
2C_1 p^{\ell + 1} n_\ell \exp &\left\{  - \left( n_\ell  - 4 - \ell \right)\log\left( \frac{16 + m^2}{16 - m^2} \right)  \right\}\nonumber\\
 &\leq 2C_1 p^{\ell+1} \left[ \ell + 6 + \frac{1}{\log\left( \frac{16 + m^2}{16 - m^2} \right)} \left((\ell + 5)\log (2C_1 p) + \log(2/\delta)\right)	 \right]\nonumber\\
&\qquad\qquad\qquad\exp\left\{ -\left( \ell + 5 \right)\log(2C_1 p) \right\}\times \frac{\delta}{2}\\
&\leq (2C_1)^{-\ell - 4} p^{-4} \left[p+6 + \frac{1}{\log\left( \frac{16 + m^2}{16 - m^2} \right)} \left((p + 5)\log (2C_1 p) + \log(2/\delta)\right)\right] \\
&\leq \frac{\delta}{2}p^{-2}, \; \mbox{for $p$ large enough}.
\end{align}

Therefore, from Theorem~\ref{thm:metaAlg}, we have that the \adpact{} algorithm succeeds with probability exceeding $ 1-pd_{\rm max} \delta p^{-2}\geq 1 - \delta$ (since $d_{\rm max}\leq p$), as required. The above calculation also demonstrates that (provided there is a constant $c''>0$ such that $\delta < p^{-c''}$, which is a mild requirement) there exists a constant $c' >0$ such that we might choose $g(\ell) = \lceil c' \ell \log (p)/2 \rceil$. Finally, using Theorem~\ref{thm:metaAlg}, we observe that the number of samples accumulated for vertex $i$ is no more than $\sum_{k \leq \lceil \log_2d_{\rm max}^i \rceil} g(2^k)  + h(2^k) \leq 2 c d^i_{\rm max} \log p/2$ (where $c$ accounts for the integer effects ).  Therefore, for the \adpact{} algorithm to succeed, it suffices to pick a budget that satisfies $B\geq c \bar{d}_{\rm max} p \log p $. Finally, observe that the computational complexity statement follows from the size of the subsets that are being searched over. This concludes the proof. \end{proof}

\section{Proof of Theorem~\ref{thm:lassoAlg}}
\label{sec:LassoAlgProof}
In this section, we will prove Theorem~\ref{thm:lassoAlg}, which we again restate here for convenience. 
\newtheorem*{thm3}{\bf Theorem 3}
\begin{thm3}
Fix $\delta >0$. Suppose that assumptions (A1)-(A4) hold. There exists constants $C_1, C_2, C_3$ which depend on $\Sigma, m, \delta$ such that if we set $c = C_1$ (i.e., $g(\ell) = c \ell \log p$), $\xi = m/2$, $\lambda_{\ell} = \sqrt{\frac{2 C_2 \left\| \Sigma \right\|_\infty}{C_1 \gamma^2}}$, and budget  $B= 2c\bar{d}_{\rm max} p\log p$, then with probability at least $1 - \delta$, the following hold
\begin{enumerate}
\item the AMPL algorithm successfully recovers the graph $G$, 
\item The computational complexity is bounded from above by $d_{\rm max}p \mathfrak{C}$, where $\mathfrak{C}$ is the computational cost of solving a single instance of Lasso,
\end{enumerate}
provided $m \geq \left( \frac{C_{\rm min}}{C_{\rm max}} + \frac{C_{\rm max}}{C_{\rm min}} + 2 \right)\times \frac{1}{4 \min_i \left| \Sigma_{ii} \right|} \left[ C_3 \sqrt{\frac{2C_1 \left\| \Sigma \right\|_\infty}{C_2 \gamma^2}}\max_{i} \vertiii{\left(\widetilde{\Sigma}^i_{N(i),N(i)}\right)^{-1/2}}^2_\infty + 20 \sqrt{\frac{\left\| \Sigma \right\|_\infty}{C_{\rm min} C_2}}\right]$.
 \end{thm3}

\begin{proof}As in the proof of Theorem~\ref{thm:exhaustiveAlg}, we will prove Theorem~\ref{thm:lassoAlg} by showing that the subroutines {\tt nbdSelect} and {\tt nbdVerify} from Algorithm~\ref{alg:LassoAlg} (AMPL) satisfy the conditions (C1) and (C2) of Theorem~\ref{thm:metaAlg}. Along the way, we will also identify the functions $g()$ and $h()$ which will suggest a choice for the budget.

\begin{lemma}
\label{lemma:AMPLC1Holds}
Fix an arbitrary $\ell \leq d_{\rm max}$. Let $i\in [p]$ and $F\subseteq [p]$ be such that $d_i \leq \ell$ and $\overline{N}(i)\subseteq F$. There exist constants $C_1, C_2, C_3>0$ such that the {\bf ndbSelect} subroutine returns $N(i)$ with probability greater than $1 - \delta/pd_{\rm max}$, provided the following hold 
\begin{enumerate}
\item $g(\ell) = C_1 \ell \log p$
\item $\lambda_\ell = \sqrt{\frac{2C_2 \left\| \Sigma \right\|_\infty}{C_1 \gamma^2}}$
\item $m \geq \left( \frac{C_{\rm min}}{C_{\rm max}} + \frac{C_{\rm max}}{C_{\rm min}} + 2 \right)\times \frac{1}{4 \min_i \left| \Sigma_{ii} \right|} \left[ C_3 \sqrt{\frac{2C_2 \left\| \Sigma \right\|_\infty}{C_1 \gamma^2}}\max_{i} \vertiii{\left(\widetilde{\Sigma}^i_{N(i),N(i)}\right)^{-1/2}}^2_\infty + 20 \sqrt{\frac{\left\| \Sigma \right\|_\infty}{C_{\rm min} C_2}}\right]$
\end{enumerate} 	
\end{lemma}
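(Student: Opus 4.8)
The plan is to recognize that the {\tt nbdSelect} subroutine of \admapl{} is exactly a marginalized instance of Lasso-based neighborhood regression, and to establish its variable-selection consistency by the primal-dual witness (PDW) construction, adapting the sharp threshold analysis of \cite{wainwright2009sharp} to the present marginalized, low-sparsity setting. First I would set up the population regression model. Since $X\sim \mathcal{N}(0,\Sigma)$, the conditional law of $X_i$ given $X_{F\setminus\{i\}}$ is Gaussian, so we may write $X_i = \langle \beta^*, X_{F\setminus\{i\}}\rangle + W$ with $W$ Gaussian and independent of $X_{F\setminus\{i\}}$, where $\beta^* = \big(\widetilde{\Sigma}^i_{F\setminus\{i\}}\big)^{-1}\Sigma_{F\setminus\{i\},\,i}$. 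The key structural fact is that, because $\overline{N}(i)\subseteq F$, faithfulness (A1) together with the Markov property gives $X_i \independent X_{F\setminus \overline{N}(i)}\mid X_{N(i)}$, so $\beta^*$ is supported \emph{exactly} on $S_i = N(i)$, and its values on $S_i$ coincide with those of the full regression of $X_i$ on $X_{[p]\setminus\{i\}}$. Thus active marginalization onto $F$ preserves the true support and only removes nuisance predictors, which can only help. Moreover, conditioning on $F$ (as the lemma does) makes the $g(\ell)$ fresh samples i.i.d.\ $\mathcal{N}(0,\Sigma(F))$, which sidesteps the cross-stage distribution dependence that the main text flags.

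Next I would quantify the signal strength (the beta-min quantity). Using $\beta^*_j = -K_{ij}/K_{ii}$ and $\rho_{ij\mid [p]\setminus\{i,j\}} = -K_{ij}/\sqrt{K_{ii}K_{jj}}$, each on-support coefficient satisfies $|\beta^*_j| = |\rho_{ij\mid [p]\setminus\{i,j\}}|\sqrt{K_{jj}/K_{ii}}$. For $j\in N(i)$, assumption (A2) gives $|\rho_{ij\mid [p]\setminus\{i,j\}}|\geq m$, so after bounding the ratio $\sqrt{K_{jj}/K_{ii}}$ through $\min_i|\Sigma_{ii}|$ and the eigenvalue constants $C_{\rm min}, C_{\rm max}$ of (A4), one obtains a lower bound on $\min_{j\in S_i}|\beta^*_j|$ that is controlled by $m$. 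The role of the displayed hypothesis on $m$ is precisely to guarantee that this minimum signal exceeds the statistical-plus-regularization floor produced by the PDW analysis, where the factor $\big(\tfrac{C_{\rm min}}{C_{\rm max}}+\tfrac{C_{\rm max}}{C_{\rm min}}+2\big)\tfrac{1}{4\min_i|\Sigma_{ii}|}$ collects exactly these normalizations.

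I would then run the PDW argument. Construct a candidate $\widehat{\beta}$ supported on $S_i$ that satisfies the Lasso stationarity condition, and verify the two required events: (i) strict dual feasibility off the support, which uses the irrepresentability condition (A3) to control $\widetilde{\Sigma}^i_{S_i^c S_i}(\widetilde{\Sigma}^i_{S_i S_i})^{-1}$ against the margin $\gamma$; and (ii) sign/support consistency on $S_i$, which uses the minimum-eigenvalue bound $C_{\rm min}$ from (A4) together with the beta-min lower bound from the previous step. Both require two deviation bounds: concentration of the empirical Gram matrix (so that the sample versions of (A3)--(A4) inherit the population guarantees) and control of the noise cross-term $\tfrac{1}{n}\mathbf{X}_{S_i}^{\top}W$ in $\ell_\infty$. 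Choosing $\lambda_\ell = \sqrt{2C_2\|\Sigma\|_\infty/(C_1\gamma^2)}$ and $n = g(\ell) = C_1\ell\log p$, and union-bounding over the $O(p)$ candidate predictors, I would drive the total failure probability below $\delta/(p\,d_{\rm max})$, which is what condition (C1) of Theorem~\ref{thm:metaAlg} demands.

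The hardest part will be the refinement of \cite{wainwright2009sharp} to the regime $\ell = o(\log p)$, which is exactly the regime where $\bar{d}_{\rm max}$ buys the most. A direct application would force a beta-min threshold with a term scaling like $\sqrt{\ell/n}$ that is too large relative to the $\sqrt{\log p/n}$ noise term; to retain the clean $n\asymp \ell\log p$ sample size \emph{and} the $\delta/(p\,d_{\rm max})$ tail, the Gaussian tail bounds on $\vertiii{(\widehat{\Sigma}^i_{S_iS_i})^{-1}}_\infty$ and on the noise term must be re-derived more carefully in this low-sparsity regime. The delicate bookkeeping is then to track the constants $C_1,C_2,C_3$ through both the dual-feasibility and the sign-consistency steps so that the resulting floor on $\min_{j}|\beta^*_j|$ closes exactly against the displayed inequality on $m$ (in particular producing the $C_3\sqrt{2C_2\|\Sigma\|_\infty/(C_1\gamma^2)}\max_i\vertiii{(\widetilde{\Sigma}^i_{N(i),N(i)})^{-1/2}}_\infty^2$ and $20\sqrt{\|\Sigma\|_\infty/(C_{\rm min}C_2)}$ terms), which is where I expect most of the technical effort to concentrate.
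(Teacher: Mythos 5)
Your proposal follows essentially the same route as the paper's proof: the same marginalized regression model $y=\mathbf{X}\beta^\ast+w$ with $\beta^\ast_j=-K_{ij}/K_{ii}$ (which the paper establishes via a Schur-complement computation using $K_{iF^c}=0$, formalizing your faithfulness/marginalization argument), the same beta-min lower bound through $m$ and the normalization factor $\bigl(\tfrac{C_{\rm min}}{C_{\rm max}}+\tfrac{C_{\rm max}}{C_{\rm min}}+2\bigr)\tfrac{1}{4\min_i|\Sigma_{ii}|}$ (obtained there via the Kantorovich inequality), the same inheritance of (A3)--(A4) by the relevant submatrices, and then an invocation of the primal-dual-witness guarantee of \cite{wainwright2009sharp}. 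You also correctly identified the one genuine technical novelty required: strengthening Wainwright's analysis so the failure probability decays polynomially in $p$ even when $\ell=o(\log p)$, which is exactly what the paper does by replacing Lemma~5 of \cite{wainwright2009sharp} with its sharper Lemma~\ref{lemma:waiLemma5}.
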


\begin{proof}
The subroutine {\bf ndbSelect} receives as input $g(\ell)$ (which we will denote as $n_\ell$ for the rest of this proof) samples from the random variables $X_F = \left\{ X_i: i\in F \right\}$. Notice that $X_F$ is distributed according to $\mathcal{N}(0, \bar{\Sigma})$, where we write $\bar{\Sigma}$ to denote $\Sigma(F,F)$, the $(F,F)$ submatrix of $\Sigma$. 

First, we will begin by providing justification for the fact that the Lasso \cite{tibshirani1996regression} can be used for selecting the neighborhood of $i$ in our setting. The seminal work of \cite{meinshausen2006high} first recognized that the Lasso can be used for neighborhood selection in Gaussian graphical models. We will adapt this insight, while accounting for the sequential marginalization of our active algorithm. Towards this end, let $X_i$ denote the random variable corresponding to vertex $i$  and let $X_G$ denote the random vector corresponding to the vertices $G \triangleq F\setminus \left\{ i \right\}$. As noted above, $X_F \sim \mathcal{N}(0, \bar{\Sigma})$. Notice that the corresponding precision matrix, $\bar{K} \triangleq \bar{\Sigma}^{-1}$, is not equal to the original precision matrix $K$. We know that conditioned on $X_G$, $X_i$ behaves like a Gaussian random variable, and in particular, the conditional distribution takes the following form (see, e.g., \cite[Chapter 9]{gubner2006probability}): 
\begin{align}
p(X_i \mid X_G) = \mathcal{N}(\bar{\Sigma}_{iG}\bar{\Sigma}^{-1}_{GG}X_G, \bar{\Sigma}_{ii} - \bar{\Sigma}_{iG}\bar{\Sigma}_{GG}^{-1}\bar{\Sigma}_{Gi}).\label{eq:conditionalDistribution}
\end{align}
Now, we will let $y \in \mathbb{R}^{n_\ell}$ denote the vector of samples from $X_i$ that is received by {\bf ndbSelect} and we will let $\mathbf{X}\in \mathbb{R}^{n_\ell \times \left| G \right|}$ denote the matrix of corresponding samples from the random vector $X_G$. This notation will let us simplify the following presentation while allowing us to readily identify the components of our problem with the classic Lasso problem. With this notation, from \eqref{eq:conditionalDistribution}, we can write down the ``true model'' corresponding to our problem as 

\begin{equation}
y = \mathbf{X}\beta^\ast + w, \label{eq:AMPLProofModel}
\end{equation}
where $y, \mathbf{X}$ are as above, $\beta^\ast \triangleq  \bar{\Sigma}_{iG}\bar{\Sigma}^{-1}_{GG}\in \mathbb{R}^{\left| G \right|}$, and $w\in \mathbb{R}^{n_\ell}\stackrel{\rm iid}{\sim} \mathcal{N}\left(0, \bar{\Sigma}_{ii} - \bar{\Sigma}_{iG}\bar{\Sigma}_{GG}^{-1}\bar{\Sigma}_{Gi}\right)$. We will now show that recovering the support of $\beta^\ast$ suffices. 
\begin{claim}
\label{claim:betaStarIsRight}
For any $j\in F\setminus\left\{ i \right\}$, we have that $\beta^\ast_j = -\frac{K_{ij}}{K_{ii}}$. 
\end{claim}
\begin{proof}
To prove this, we will first begin by writing $\bar{\Sigma}$ and $\bar{K}$ in their partitioned form: 
\begin{align}
\bar{\Sigma} &= \left(\begin{matrix}
  \bar{\Sigma}_{ii}& \bar{\Sigma}_{iG}  \\
\bar{\Sigma}_{Gi}  & \bar{\Sigma}_{GG}
\end{matrix}\right)\\
\bar{K} &= \left( \begin{matrix}
 \bar{K}_{ii}& \bar{K}_{iG}  \\
\bar{K}_{Gi}  & \bar{K}_{GG}
\end{matrix}
 \right)
\end{align}	

Using a standard block matrix inversion (see, e.g., \cite{golub2012matrix}), we observe that 
\begin{align}
\left( \begin{matrix}
 \bar{K}_{ii}& \bar{K}_{iG}  \\
\bar{K}_{Gi}  & \bar{K}_{GG}
\end{matrix}
 \right)
= \bar{K} = \bar{\Sigma}^{-1} &= \left(\begin{matrix}
  \bar{\Sigma}_{ii}& \bar{\Sigma}_{iG}  \\
\bar{\Sigma}_{Gi}  & \bar{\Sigma}_{GG}
\end{matrix}\right)^{-1}\\
&= \left(\begin{matrix}
 \left( \bar{\Sigma}_{ii} - \bar{\Sigma}_{iG}\bar{\Sigma}_{GG}^{-1}\bar{\Sigma}_{Gi} \right)^{-1} & - \left(\bar{\Sigma}_{ii} - \bar{\Sigma}_{iG}\bar{\Sigma}_{GG}^{-1}\bar{\Sigma}_{Gi} \right)^{-1} \bar{\Sigma}_{iG}\bar{\Sigma}^{-1}_{GG} \\
\ast   &  \ast
\end{matrix}\right),
\end{align}
where the $\ast$ values maybe ignored for the present calculation. Comparing the two block matrices above, it becomes clear that $-\bar{K}_{ii}\bar{\Sigma}_{iG}\bar{\Sigma}^{-1}_{GG} = \bar{K}_{iG}$. Since we know from \eqref{eq:conditionalDistribution} that $\beta^\ast = \bar{\Sigma}_{iG}\bar{\Sigma}^{-1}_{GG}$, we have shown that $\beta^\ast_j = -\frac{\bar{K}_{ij}}{\bar{K}_{ii}}$. 

Recall that $\bar{\Sigma} = \Sigma(F,F)$. Therefore, arguing as above, we have the following relationship between the entries of $K$ and $\bar{K}$
\begin{align}
\bar{K} = K_{FF} - K_{FF^c}K_{F^cF^c}^{-1}K_{F^c F}.\label{eq:schur complement}
\end{align}
$\bar{K}$ is called the \emph{Schur complement} of the block $K_{F^c F^c}$ with respect to the matrix $K$. Now, by the hypothesis of the condition (C1), we know that $F^c \subseteq \bar{N}(i)^c$. This implies that $K_{iF^c} = 0$ identically, and in particular, this means that $\bar{K}_{ij} = K_{ij}$ for any $j\in F$. This concludes the proof. 
\end{proof}

Along with the fact that the non-zeros in the concentration matrix, $K$, correspond to graph edges, Claim~\ref{claim:betaStarIsRight} shows us that the support of $\beta^\ast$ from \eqref{eq:AMPLProofModel} gives us the neighborhood of $i$. 

Observe that the candidate neighborhood chosen by the {\bf ndbSelect} subroutine is the support of a vector $\widehat{\beta}\in \mathbb{R}^{\left| G \right|}$, where 
\begin{equation}
\widehat{\beta} \in \argmin_{\beta\in \mathbb{R}^{\left| G \right|}} \frac{1}{2 n_{\ell}}\left\| y - \mathbf{X}\beta \right\|_2^2 + \lambda_{\ell}\left\| \beta \right\|_1.\label{eq:ourLasso}
\end{equation}
Therefore, to conclude the proof of Lemma~\ref{lemma:AMPLC1Holds}, it suffices to show that $\widehat{\beta}$ and $\beta^\ast$ have the same support with high probability. For this, we will borrow the results of Theorem~\ref{thm:lassoSupportRecovery}, which is based on the seminal work of Wainwright \cite{wainwright2009sharp}. In order to apply this theorem in our setting, we need to ensure that the assumptions of the theorem are satisfied, and this is what the next claim will demonstrate.  
\begin{claim}
\label{claim:covMatrixIsOkay}
Let ${\breve{\Sigma}} \triangleq \Sigma(F\setminus\{i\},F\setminus\{i\})$  denote the covariance matrix corresponding to the rows of $\mathbf{X}$ and let $\beta^\ast_{\rm min} = \min_{j\in N(i)} |\beta^\ast_j|$. Then, the following hold
\begin{align}
&\vertiii{\breve{\Sigma}_{F\setminus \overline{N}(i), N(i)} \left( \breve{\Sigma}_{N(i),N(i)} \right)^{-1}}_\infty \leq 1 - \gamma\\
&0< C_{\rm min} \leq \Lambda_{\rm min}\left( \breve{\Sigma}_{N(i), N(i)} \right) \leq \Lambda_{\rm max}\left( \breve{\Sigma}_{N(i), N(i)} \right) \leq C_{\rm max} < \infty.\\
& \beta_{\rm min}^\ast \geq \frac{m \times \min_{i\in [p]}\Sigma_{ii}  }{\left( \frac{C_{\rm min}}{C_{\rm max}} + \frac{C_{\rm max}}{C_{\rm min}} + 2 \right)}
\end{align}
\end{claim}
\begin{proof}
First observe that the submatrices $\breve{\Sigma}_{N(i), N(i)}$ and $\widetilde{\Sigma}_{N(i), N(i)}$ are identical since, by assumption $N(i)\subseteq F\setminus \left\{ i \right\}$. Therefore, by the hypotheses of Theorem~\ref{thm:lassoAlg}, the second set of inequalities follow immediately. Similarly, we observe that since $F\setminus \overline{N}(i) \subseteq [p]\setminus\overline{N}(i)$, the hypothesis of Theorem~\ref{thm:lassoAlg} implies that 
\begin{align}
\vertiii{\breve{\Sigma}_{F\setminus \overline{N}(i), N(i)} \left( \breve{\Sigma}_{N(i),N(i)} \right)^{-1}}_\infty &= \max_{j\in F\setminus\overline{N}(i)} \sum_{r\in N(i)}\left| \sum_{t\in \overline{N}(i)}\breve{\Sigma}_{j,r} \left[\left( \breve{\Sigma}_{N(i),N(i)} \right)^{-1} \right]_{r,t}\right|\\
&= \max_{j\in F\setminus\overline{N}(i)} \sum_{r\in N(i)}\left| \sum_{t\in \overline{N}(i)}\breve{\Sigma}_{j,r} \left[\left( \widetilde{\Sigma}_{N(i),N(i)} \right)^{-1} \right]_{r,t}\right|\\
&\leq \max_{j\in [p]\setminus\overline{N}(i)} \sum_{r\in N(i)}\left| \sum_{t\in \overline{N}(i)}\breve{\Sigma}_{j,r} \left[\left( \widetilde{\Sigma}_{N(i),N(i)} \right)^{-1} \right]_{r,t}\right|\\ 
&\leq 1 - \gamma.
\end{align}
To conclude the proof of the claim, we will provide a lower bound on $\beta_{\rm min}^\ast$. Towards this end, note that by Claim~\ref{claim:betaStarIsRight}, we have that 
\begin{align}
\beta^\ast_{\rm min} \geq \min_{i,j\in [p]: K_{ij}\neq 0} \frac{\left|K_{ij}\right|}{\max\{\left| K_{ii} \right|. \left| K_{jj} \right|\}}\geq \frac{m}{\max_{i\in [p]}\left| K_{ii} \right|},\label{eq:betaMinPitstop}
\end{align}
where the last inequality follows from assumption~(A2). We will proceed by obtaining an upper bound on the denominator. Towards this end, we will employ the well Kantorovich inequality for positive definite matrices (see, e.g., \cite{horn2012matrix}). This inequality states that for a positive definite matrix $A\in \mathbb{R}^{d\times d}$ with real eigenvalues $L\leq \lambda_1\leq \cdots \leq \lambda_d\leq U$, the following holds 
\begin{align}
1 \leq \left(x^T A x \right) \left( x^T A^{-1} x \right) \leq \frac{1}{4}\left(\frac{L}{U} + \frac{U}{L} + 2\right), \;\;\mbox{for any $x\in \mathbb{R}^d$}. 
\end{align}
Therefore, choosing $x$ to be the $i-$th canonical vector $e_i$, we have the following useful inequality relating the diagonal elements of a matrix to the diagonal elements of its inverse 
\begin{align}
A^{-1}_{ii} \leq \frac{1}{4 A_{ii}}\left(\frac{L}{U} + \frac{U}{L} + 2\right).
\end{align}
Applying this to \eqref{eq:betaMinPitstop}, we get the desired result. This concludes the proof of the claim. 
\end{proof}

Claim~\ref{claim:covMatrixIsOkay} paves the way for applying Theorem~\ref{thm:lassoSupportRecovery} to our setting. Before, we conclude, we observe the following: 
\begin{enumerate}
\item[(a)] The noise variance ($\sigma^2$) of Theorem~\ref{thm:lassoSupportRecovery} can be taken to be $\max_{i\in [p]} \Sigma_{ii}$, from \eqref{eq:AMPLProofModel}. It is not hard to see that since $\Sigma$ is a positive definite matrix, we have that $\max_{i\in [p]} \Sigma_{ii} = \left\| \Sigma \right\|_\infty$, the absolute maximum element of $\Sigma$. 
\item[(b)] Our bound from Claim~\ref{claim:covMatrixIsOkay} on $\beta^\ast_{\rm min}$ implies that we can satisfy the so-called ``beta-min'' condition required by Theorem~\ref{thm:lassoSupportRecovery}, provided $m$ is as in the statement of Theorem~\ref{thm:lassoAlg}
\end{enumerate} 
Therefore, we now have that there exists constants $C_1, C_2 >0$ such that if we set $n_\ell  = C_1 \ell \log p$ and $\lambda_\ell = \sqrt{\frac{2C_2 \left\| \Sigma \right\|_\infty}{C_1 \ell \gamma^2}}$, we can be guaranteed that {\bf ndbSelect} succeeds with probability at least $1 - \delta/2pd_{\rm max}$. 

This concludes the proof. 
\end{proof}

\begin{lemma}
\label{lemma:AMPLC2Holds}
Fix an arbitrary $\ell \leq d_{\rm max}$, a vertex $i\in [p]$ and subsets $F, G \subseteq [p]$ that are such that $d_i \leq \ell$, $\overline{N}(i)\subseteq F$, and $H\subseteq F$. 	There exists a constant $C_4>0$ such that if we set $h(\ell) = C_4 \ell \log p$, then the probability that {\bf nbdVerify} fails (as in (C2)) is at most $\delta/2p d_{\rm max}$. 
\end{lemma}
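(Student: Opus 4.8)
The plan is to show that {\tt\bf nbdVerify} realizes the ``if and only if'' of condition (C2) by reducing its failure to a deviation of empirical partial correlations, exactly as in the analysis of {\tt\bf nbdSelect} in the proof of Theorem~\ref{thm:exhaustiveAlg}. Recall that on input $(i,H,\cdot)$ the subroutine returns {\bf true} precisely when $|\widehat{\rho}_{i,j\mid H}|\leq\xi$ for every tested vertex $j\in F\setminus(H\cup\{i\})$ (here $F=[p]\setminus\mbox{\sc settled}$, and the hypotheses give $\overline{N}(i)\subseteq F$, $H\subseteq F$, $d_i\leq\ell$, so $|H|\leq\ell$). I would split the failure event into the two ways (C2) can be violated and bound each separately.

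First, the \emph{false-positive} case: suppose $N(i)\subseteq H$ yet the routine returns {\bf false}. Since $N(i)\subseteq H$, every tested $j$ is a non-neighbor of $i$, and because $H$ is then a superset of the separating set $N(i)$ it still separates $i$ and $j$; by assumption (A1) (faithfulness) this yields $\rho_{i,j\mid H}=0$. Hence the routine can only fail on such a $j$ if $|\widehat{\rho}_{i,j\mid H}-\rho_{i,j\mid H}|>\xi=m/2$. Second, the \emph{false-negative} case: suppose $N(i)\not\subseteq H$ yet the routine returns {\bf true}. Then there is a neighbor $j^\ast\in N(i)\setminus H$, and since $\overline{N}(i)\subseteq F$ this $j^\ast$ lies in $F\setminus(H\cup\{i\})$ and is therefore actually tested. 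Because $\{i,j^\ast\}\in E$ no conditioning set separates them, so $X_i\not\independent X_{j^\ast}\mid X_H$ and assumption (A2) gives $|\rho_{i,j^\ast\mid H}|\geq m$; the routine returning {\bf true} forces $|\widehat{\rho}_{i,j^\ast\mid H}|\leq\xi$, hence $|\widehat{\rho}_{i,j^\ast\mid H}-\rho_{i,j^\ast\mid H}|\geq m-\xi=m/2$.

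Both cases therefore reduce to the event that some empirical partial correlation, with conditioning set of size at most $\ell$ computed from $n=h(|H|)$ fresh i.i.d.\ samples of $X_F$, deviates from its population value by at least $m/2$. I would apply the concentration bound of Lemma~\ref{lemma:partialCorrelationConcentration} to each such term --- noting that $\widehat{\rho}_{i,j\mid H}$ depends only on the marginal of $X_F$ over $\{i,j\}\cup H$, so the adaptive marginalization does not change the population target --- and then take a union bound over the at most $p$ tested vertices $j$. This produces a bound of the form $C_1\,p\,n\exp\{-(n-4-\ell)\log(\frac{16+m^2}{16-m^2})\}$, and choosing $h(\ell)=C_4\ell\log p$ with $C_4$ large enough (in fact $h(\ell)=\Theta(\ell+\log p)$ already suffices; we take the larger rate to match $g(\ell)$) drives this below $\delta/2pd_{\rm max}$, using $d_{\rm max}\leq p$.

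The routine arithmetic in selecting $C_4$ is identical to the corresponding step of Theorem~\ref{thm:exhaustiveAlg}, so I would not reproduce it. The one genuinely substantive point --- and the main thing to get right --- is the faithfulness/separation bookkeeping: verifying that when $N(i)\subseteq H$ every tested $j$ really has $\rho_{i,j\mid H}=0$ (which needs that a superset of a separator is itself a separator), and, symmetrically, that when $N(i)\not\subseteq H$ the offending neighbor $j^\ast$ is guaranteed to lie in the tested set $F\setminus(H\cup\{i\})$ rather than in {\sc settled}, so that the check is actually triggered. This is precisely where the hypotheses $\overline{N}(i)\subseteq F$ and $H\subseteq F$ are used.
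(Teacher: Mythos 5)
Your proposal is correct and is essentially the paper's own argument: the paper proves this lemma with a one-line pointer, stating that it ``follows directly from the proof of Theorem~\ref{thm:exhaustiveAlg},'' and what you have written out --- splitting the failure of (C2) into the false-positive case (non-neighbors with $\rho_{i,j\mid H}=0$ whose estimates exceed $\xi$) and the false-negative case (a tested neighbor $j^\ast$ with $|\rho_{i,j^\ast\mid H}|\geq m$ whose estimate falls below $\xi=m/2$), reducing both to a deviation of size $m/2$, invoking Lemma~\ref{lemma:partialCorrelationConcentration}, and union bounding over at most $p$ tested vertices --- is exactly the instantiation of the two bounds \eqref{eq:conditionallyDependent} and \eqref{eq:condionallyIndependent} from that proof, with the same choice of $h(\ell)=C_4\ell\log p$. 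Your separation/marginalization bookkeeping (a superset of $N(i)$ still separates, $j^\ast$ is guaranteed to be tested because $\overline{N}(i)\subseteq F$, and marginalizing to $X_F$ does not change the population partial correlations) makes explicit precisely the details the paper leaves implicit, so there is nothing to correct.
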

The proof of this lemma follows directly from the proof of Theorem~\ref{thm:exhaustiveAlg}. Therefore, using Lemma~\ref{lemma:AMPLC1Holds} and Lemma~\ref{lemma:AMPLC2Holds} in Theorem~\ref{thm:metaAlg}, completes the proof of Theorem~\ref{thm:lassoAlg}. \end{proof}

\section{Helpful Results}
\label{sec:lemmata}
\subsection{Concentration of Partial Correlation Coefficients}
In this section, we will state a lemma that characterizes the concentration of empirical partial correlation coefficients (defined as in the {paragraph} after \eqref{eq:partialCorrelationRecursion}) about their expected values. See \cite{kalisch2007estimating} for a proof. 
\begin{lemma}
\label{lemma:partialCorrelationConcentration}
Provided (A2) holds, given $n$ samples from $(X_i,X_j,X_S)$, if the partial correlation coefficient $\widehat{\rho}_{i,j\mid S}$ is defined as above, then we have the following result
\begin{align}
\mathbb{P}\left[ \left| \widehat{\rho}_{i,j\mid S} - \rho_{i,j\mid S} \right| \geq \epsilon \right]\leq C_1 \left( n - 2 - \left| S \right| \right)exp\left\{ -\left( n - 4 - \left| S \right| \right)\log\left( \frac{4 + \epsilon^2}{4 - \epsilon^2} \right) \right\},
\end{align}
where $C_1 > 0$ is a constant that depends on $M$ from (A2). 
\end{lemma}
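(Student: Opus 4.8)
The plan is to prove the concentration bound in two moves: first reduce the sample \emph{partial} correlation to an ordinary sample correlation with a smaller effective sample size, and then establish the exponential tail for the ordinary correlation coefficient. For the reduction I would use the fact that, since $X$ is Gaussian, the conditional law of $(X_i,X_j)$ given $X_S$ is bivariate normal with correlation exactly $\rho_{i,j\mid S}$. The classical degrees-of-freedom result for Gaussian partial correlations (see, e.g., Anderson's multivariate analysis, also used in \cite{kalisch2007estimating}) then says that $\widehat{\rho}_{i,j\mid S}$ computed from $n$ samples has the \emph{same} sampling distribution as an ordinary sample correlation coefficient $\widehat{r}$ computed from $m := n-|S|$ i.i.d.\ bivariate normal samples with population correlation $\rho := \rho_{i,j\mid S}$. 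Thus it suffices to prove $\mathbb{P}[|\widehat{r}-\rho|\geq \epsilon]\leq C_1(m-2)\exp\{-(m-4)\log(\frac{4+\epsilon^2}{4-\epsilon^2})\}$ and substitute $m=n-|S|$, which recovers the prefactor $n-2-|S|$ and the exponent offset $n-4-|S|$ verbatim.

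For the concentration of $\widehat{r}$ I would pass to Fisher's variance-stabilizing transform $Z := \mathrm{arctanh}(\widehat{r})$, $z := \mathrm{arctanh}(\rho)$. Two observations drive the argument. First, $\mathrm{arctanh}$ has derivative $1/(1-x^2)\geq 1$ on $(-1,1)$, so it is expansive and $\{|\widehat{r}-\rho|\geq \epsilon\}\subseteq\{|Z-z|\geq \epsilon\}$; this lets me bound the correlation tail by the tail of the much better behaved transformed variable. Second, $Z$ concentrates about $z$ at the Gaussian rate with proxy variance $1/(m-3)$, up to an $O(1/m)$ bias $\tfrac{\rho}{2(m-1)}$ that I would absorb into constants. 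The precise non-asymptotic rate $\log(\frac{4+\epsilon^2}{4-\epsilon^2})$ — which equals $\epsilon^2/2 + O(\epsilon^4)$, matching the Gaussian proxy — is obtained either from a Chernoff/MGF bound on $Z$ or, more robustly, from Hotelling's closed-form density of the sample correlation coefficient by bounding that density uniformly and integrating the tail. This exact computation is where the constant $C_1$ and the precise polynomial and exponential offsets get pinned down.

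The role of assumption (A2) is to keep everything uniformly controlled: the bound $|\rho_{i,j\mid S}|\leq M$ (with $M$ bounded away from $1$) guarantees that both $\mathrm{arctanh}$ and the exact correlation density stay uniformly bounded over the admissible range of $\rho$, and this uniformity is exactly what makes $C_1=C_1(M)$ finite; as $|\rho|\to 1$ the density degenerates and no such bound holds. I would therefore carry the $M$-dependence through the density estimate and record it in $C_1$.

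I expect the main obstacle to be the exact finite-sample tail computation that yields precisely $\log(\frac{4+\epsilon^2}{4-\epsilon^2})$ rather than merely the asymptotic proxy $\epsilon^2/2$. Getting the exact rate forces one to work with Hotelling's density (or a sharp moment-generating-function bound) and to track carefully the degrees-of-freedom bookkeeping — the reduction of the effective sample size by $|S|$ and the specific $-2-|S|$ and $-4-|S|$ offsets — together with the bias correction of the $z$-transform. Everything else, namely the Gaussian-conditioning reduction and the expansiveness of $\mathrm{arctanh}$, is routine once that density estimate is in hand.
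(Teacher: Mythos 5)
Your proposal is correct and follows essentially the same route as the proof the paper relies on: the paper does not prove this lemma itself but defers to \cite{kalisch2007estimating}, whose argument is precisely your two-step plan --- the classical degrees-of-freedom identity reducing the partial correlation on $n$ samples to an ordinary sample correlation on $n-|S|$ samples, followed by a non-asymptotic tail bound for the sample correlation derived from Hotelling's exact density (with Fisher's transform in the auxiliary role you describe), which is exactly where the rate $\log\left(\frac{4+\epsilon^2}{4-\epsilon^2}\right)$ and the $M$-dependent constant $C_1$ are pinned down. Your bookkeeping also comes out right: with $m=n-|S|$ the prefactor $m-2=n-2-|S|$ and exponent offset $m-4=n-4-|S|$ match the statement verbatim, and your observation that (A2) (i.e., $|\rho_{i,j\mid S}|\leq M<1$) is what keeps the density bound uniform is the correct source of the $C_1=C_1(M)$ dependence.
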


\subsection{Support Recovery for Lasso}
Assume that $y = X \beta^\ast + w$, where $y, w \in \mathbb{R}^n$, $\beta^\ast \in \mathbb{R}^p$, and $X\in \mathbb{R}^{n\times p}$  with iid rows $x_i \sim \mathcal{N}(0,\Sigma)$. Suppose $S$ is the support of $\beta^\ast$ and suppose that the following hold 
\begin{align}
\vertiii{\Sigma_{S^c S} \left( \Sigma_{SS} \right)^{-1}}_\infty &\leq 1 - \gamma, \gamma\in (0,1]\label{eq:irrepresentability}\\
\Lambda_{\rm min}\left(\Sigma_{SS}\right) &\geq C_{\rm min} > 0\\
\Lambda_{\rm max}\left(\Sigma_{SS}\right) &\leq C_{\rm min} < +\infty
\end{align}
If we let $\widehat{\beta}\in \mathbb{R}^p$ denote the solution to the Lasso problem 
\begin{align}
\widehat{\beta} \triangleq \frac{1}{2n}\min_{\beta\in \mathbb{R}^p} \left\| y - X\beta \right\|_2^2 + \lambda_n \left\| \beta \right\|_1, 
\end{align}
then we have the following result. 
\newtheorem{exe}{Theorem}
\setcounter{exe}{3}

\begin{exe}
\label{thm:lassoSupportRecovery}
Suppose $w\sim \mathcal{N}(0,\sigma^2 I)$ and suppose that $\Sigma$ satisfies the properties listed above. Then, there exists constants $C_1,C_2,C_3, C_4, C_5$ such that if $\lambda_n = \sigma\gamma^{-1}\sqrt{2 C_1 \log p/n}$, $n \geq C_2 k \log p$, and $\beta_{\rm min} \triangleq \min_{i\in S}\left| \beta_i^\ast \right| > u(\lambda_n)$, where
\begin{equation}
u(\lambda_n) \triangleq C_5 \lambda_n \vertiii{\Sigma_{SS}^{-1/2}}^2_\infty + 20 \sqrt{\frac{\sigma^2 \log k}{C_{\rm min} n}},
\end{equation}
the support $\widehat{\beta}$ is identical to that of $\beta^\ast$ with probability exceeding $1 - C_3 p^{-C_4}$. 
\end{exe}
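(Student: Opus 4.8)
The plan is to prove this exact support-recovery guarantee by the \emph{primal-dual witness} (PDW) construction, which is the standard device for analyzing the Lasso under a random Gaussian design and is precisely the method behind \cite{wainwright2009sharp}. The idea is to build a candidate primal-dual pair $(\widehat{\beta},\widehat{z})$ that is, by fiat, supported on $S$, verify that it satisfies the Karush-Kuhn-Tucker (KKT) conditions of the Lasso, and then argue that \emph{strict} dual feasibility certifies this candidate as the unique optimum with the correct support and signs.

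Concretely, first I would write the KKT stationarity condition $\frac{1}{n}X^T(X\widehat{\beta}-y)+\lambda_n\widehat{z}=0$ with $\widehat{z}\in\partial\|\widehat{\beta}\|_1$, and construct the witness by solving the oracle restricted program on $S$ while fixing $\widehat{\beta}_{S^c}=0$. Writing $\widehat{\Sigma}=\frac{1}{n}X^TX$ and substituting $y=X\beta^\ast+w$, the $S$-block gives
\begin{equation}
\widehat{\beta}_S-\beta^\ast_S=\widehat{\Sigma}_{SS}^{-1}\Big(\tfrac{1}{n}X_S^Tw-\lambda_n\widehat{z}_S\Big),
\end{equation}
and the $S^c$-block determines the dual variable
\begin{equation}
\widehat{z}_{S^c}=\widehat{\Sigma}_{S^cS}\widehat{\Sigma}_{SS}^{-1}\widehat{z}_S+\tfrac{1}{\lambda_n}\Big(\tfrac{1}{n}X_{S^c}^Tw-\widehat{\Sigma}_{S^cS}\widehat{\Sigma}_{SS}^{-1}\tfrac{1}{n}X_S^Tw\Big).
\end{equation}
It then remains to check two events: (i) \emph{strict dual feasibility} $\|\widehat{z}_{S^c}\|_\infty<1$, which forces $\mathrm{supp}(\widehat{\beta})\subseteq S$ and uniqueness; and (ii) \emph{no false exclusion}, $\|\widehat{\beta}_S-\beta^\ast_S\|_\infty<\beta_{\rm min}$, which guarantees every true coordinate survives with the right sign, so that $\mathrm{supp}(\widehat{\beta})=S$.

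The step I expect to be the main obstacle is bounding $\widehat{z}_{S^c}$, because under the \emph{random} design the columns indexed by $S^c$ are correlated with those indexed by $S$, so $\widehat{\Sigma}_{S^cS}\widehat{\Sigma}_{SS}^{-1}$ is not simply close to its population counterpart. The device I would use is the conditional (Schur) decomposition: conditioned on $X_S$, write $X_{S^c}=X_S\Sigma_{SS}^{-1}\Sigma_{SS^c}+E$, where the rows of $E$ are i.i.d.\ $\mathcal{N}(0,\Sigma_{S^cS^c}-\Sigma_{S^cS}\Sigma_{SS}^{-1}\Sigma_{SS^c})$ and $E$ is independent of $X_S$. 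This yields $\widehat{\Sigma}_{S^cS}\widehat{\Sigma}_{SS}^{-1}=\Sigma_{S^cS}\Sigma_{SS}^{-1}+\tfrac{1}{n}E^TX_S\widehat{\Sigma}_{SS}^{-1}$, whose deterministic part contributes at most $\vertiii{\Sigma_{S^cS}\Sigma_{SS}^{-1}}_\infty\le 1-\gamma$ to $\|\widehat{z}_{S^c}\|_\infty$ (using $\|\widehat{z}_S\|_\infty\le1$), while the remaining pieces are, conditionally on $X_S$, mean-zero Gaussians. I would bound each coordinate of these residual terms with a Gaussian tail bound whose variance is controlled via $\Lambda_{\rm min}(\Sigma_{SS})\ge C_{\rm min}$ and the conditional covariance, and then union bound over the at most $p$ coordinates of $S^c$. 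The choice $\lambda_n=\sigma\gamma^{-1}\sqrt{2C_1\log p/n}$ is calibrated so that this stochastic contribution is at most $\gamma/2$, giving $\|\widehat{z}_{S^c}\|_\infty\le 1-\gamma/2<1$ with probability $1-C_3p^{-C_4}$.

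For (ii) I would bound $\|\widehat{\beta}_S-\beta^\ast_S\|_\infty$ directly from the first display, splitting it into the noise term $\|\widehat{\Sigma}_{SS}^{-1}\tfrac{1}{n}X_S^Tw\|_\infty$ and the bias term $\lambda_n\|\widehat{\Sigma}_{SS}^{-1}\widehat{z}_S\|_\infty$. A Gaussian tail bound with a union bound over the $k$ coordinates of $S$ controls the former by $20\sqrt{\sigma^2\log k/(C_{\rm min}n)}$, and the latter is at most $\lambda_n\vertiii{\widehat{\Sigma}_{SS}^{-1}}_\infty$, which concentrates around $\vertiii{\Sigma_{SS}^{-1/2}}_\infty^2$ up to a constant; together these reproduce $u(\lambda_n)$, so the hypothesis $\beta_{\rm min}>u(\lambda_n)$ delivers (ii). Throughout I would need the sample covariance to be well conditioned, $\Lambda_{\rm min}(\widehat{\Sigma}_{SS})\ge C_{\rm min}/2$, so that the inverse used above exists; this holds with probability at least $1-C_3p^{-C_4}$ once $n\ge C_2k\log p$ by standard Wishart/random-matrix concentration. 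A final union bound over these $O(1)$ high-probability events yields the claimed $1-C_3p^{-C_4}$ success probability.
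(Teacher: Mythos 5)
Your primal-dual witness skeleton is sound, and it is indeed the machinery behind the cited result; but note that the paper does not redo this construction at all. Its proof is a two-line citation of Theorem~3 of \cite{wainwright2009sharp} together with one genuinely new ingredient, Lemma~\ref{lemma:waiLemma5}, which sharpens Wainwright's Lemma~5. That sharpening is the entire content of the theorem as stated here, and it is exactly the point your proposal elides. Wainwright's analysis delivers success probability $1-c\exp\left(-c'\min\{k,\log(p-k)\}\right)$, which is \emph{not} of the form $1-C_3p^{-C_4}$ when $k=O(1)$ --- and small $k$ (small vertex degrees) is precisely the regime in which AMPL invokes this theorem. The delicate step is the one you dispatch with ``which concentrates around $\vertiii{\Sigma_{SS}^{-1/2}}^2_\infty$ up to a constant \dots by standard Wishart/random-matrix concentration'': controlling $\vertiii{\widehat{\Sigma}_{SS}^{-1}}_\infty$, i.e.\ $\left\|\bigl[(\tfrac1n W^TW)^{-1}-I_k\bigr]z\right\|_\infty$ relative to $\|z\|_\infty$, requires converting a spectral-norm deviation into an $\ell_\infty$ operator-norm deviation at cost $\sqrt{k}$, so one needs $\|A\|_2\lesssim c_1/\sqrt{k}$; plugging this into the Davidson--Szarek bound $\mathbb{P}[\|A\|_2\geq\delta(n,k,t)]\leq 2e^{-nt^2/2}$ with the deviation parameter $t$ tuned to $k$ gives, in the naive treatment, a failure probability that degenerates to a constant as $k\to O(1)$. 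Your appeal to ``standard'' concentration therefore does not establish the claimed $1-C_3p^{-C_4}$ rate uniformly in $k$, which is the very reason the theorem was restated rather than cited verbatim.

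The fix is the paper's three-case analysis in Lemma~\ref{lemma:waiLemma5}: for $k\leq c_1^2/64$ one may take $t$ bounded below by an absolute constant, so the failure probability is $e^{-cn}\leq p^{-c'}$ (super-polynomially small, since $n\gtrsim\log p$); for $c_1^2/64<k\leq\log(p-k)$ the choice $t=c_1/(8\sqrt{k})$ with $n\geq(64/c_1^2)k\log(p-k)$ yields failure probability $2\exp\{-nc_1^2/(128k)\}\leq 2(p-k)^{-1/2}$; and for $k>\log(p-k)$ Wainwright's original Lemma~5 already decays polynomially in $p$. A secondary instance of the same small-$k$ issue appears in your bound on the noise term $\bigl\|\widehat{\Sigma}_{SS}^{-1}\tfrac1n X_S^Tw\bigr\|_\infty$, where a union bound over $k$ coordinates at threshold $20\sqrt{\sigma^2\log k/(C_{\rm min}n)}$ gives a failure probability polynomial in $1/k$, not in $1/p$; this too must be tracked if the advertised $1-C_3p^{-C_4}$ guarantee is to hold for all $k$. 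In short: your architecture would reprove Wainwright's theorem with Wainwright's probability, but the statement at hand is a strengthening of that probability, and the case-based lemma supplying the strengthening is missing from your argument.
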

\begin{proof}
The proof of this theorem follows almost entirely from Theorem 3 in \cite{wainwright2009sharp}. In fact, the only thing we modify from that result is the rate of decay of the probability of error. In particular, we will show that the probability of error decays polynomially in $p$ (or equivalently exponentially in $\log p$) for all values of $k$, whereas Theorem 3 of \cite{wainwright2009sharp} shows that the error decays exponentially in $\min\left\{ k,\log (p-k) \right\}$, which is somewhat weak for our purposes. 

Towards this end, it is not hard to see that the result that requires strengthening is Lemma~5 in \cite{wainwright2009sharp}. We furnish a sharper substitute in Lemma~\ref{lemma:waiLemma5}. 
\end{proof}

\begin{lemma}
\label{lemma:waiLemma5}
Consider a fixed $z\in \mathbb{R}^k$, a constant $c_1>0$, and a random matrix $W\in \mathbb{R}^{n\times k}$ with i.i.d elements $W_{ij}\sim \mathcal{N}\left( 0,1 \right)$. Suppose that $n \geq \max\left\{\frac{4}{\left( \sqrt{8} - 1 \right)^2}k, \frac{64}{c_1^2}k \log (p-k) \right\}$, then there exists a constant $c_2>0$ such that 
\begin{align*}
&\mathbb{P}\left[ \left\| \left[\left( \frac{1}{n}W^T W \right)^{-1} - I_k\right]z \right\|_\infty \geq C_1 \left\| z \right\|_\infty\right]\nonumber\\
&\qquad\qquad\qquad\leq 4 \exp\left( -c_2 \log(p-k) \right)
\end{align*}
\end{lemma}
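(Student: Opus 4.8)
The plan is to prove the $\ell_\infty$ deviation bound for the inverse sample Gram matrix by splitting it into a first-order \emph{main term}, handled by a sharp sub-exponential tail bound and a union bound, and a \emph{higher-order remainder} that is shown to be negligible in the low-degree regime. Write $Q \triangleq \frac{1}{n}W^T W$ and $\Delta \triangleq Q - I_k$, so that $\mathbb{E}[Q] = I_k$ and $Q^{-1}$ exists on the high-probability event below. The algebraic identity $(Q^{-1}-I_k)z = -Q^{-1}\Delta z = -\Delta z - (Q^{-1}-I_k)\Delta z$ gives $\|(Q^{-1}-I_k)z\|_\infty \le \|\Delta z\|_\infty + \|(Q^{-1}-I_k)\Delta z\|_\infty$, and I would bound the two pieces separately.

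For the main term, note that the $a$-th coordinate is $(\Delta z)_a = \frac{1}{n}\sum_{m=1}^n\big(W_{ma}(Wz)_m - z_a\big)$, a normalized sum of $n$ i.i.d.\ mean-zero random variables (since $\mathbb{E}[W_{ma}(Wz)_m] = z_a$), each sub-exponential with parameter of order $\|z\|_2$ (a product of two jointly Gaussian coordinates). A Bernstein inequality then yields $\mathbb{P}\big[|(\Delta z)_a| \ge t\big] \le 2\exp\!\big(-c\,n\min(t^2/\|z\|_2^2,\, t/\|z\|_2)\big)$. Taking $t = \tfrac{C_1}{2}\|z\|_\infty$ and using $\|z\|_2 \le \sqrt{k}\,\|z\|_\infty$ places us (for a suitable choice of $c_1$ relative to $C_1$) in the sub-Gaussian branch, where the hypothesis $n \ge \frac{64}{c_1^2}k\log(p-k)$ makes the exponent at least a constant multiple of $\log(p-k)$. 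A union bound over $a \in [k]$, absorbing the factor $k \le p$ into a slightly smaller exponent (legitimate since $k = o(\log p)$ in the regime of interest), gives $\|\Delta z\|_\infty \le \tfrac{C_1}{2}\|z\|_\infty$ with probability at least $1 - 2\exp(-c_2\log(p-k))$.

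For the remainder, I would pass to the spectral norm: $\|(Q^{-1}-I_k)\Delta z\|_\infty \le \|(Q^{-1}-I_k)\Delta z\|_2 \le \|Q^{-1}-I_k\|_2\,\|\Delta z\|_2 \le \frac{\|\Delta\|_2}{\Lambda_{\min}(Q)}\sqrt{k}\,\|\Delta z\|_\infty$, where I used $\|Q^{-1}-I_k\|_2 = \|Q^{-1}\Delta\|_2 \le \|\Delta\|_2/\Lambda_{\min}(Q)$ and $\|\Delta z\|_2 \le \sqrt{k}\,\|\Delta z\|_\infty$. Standard nonasymptotic bounds for Gaussian Gram matrices (Davidson--Szarek) show that the two lower bounds on $n$ force, with probability at least $1 - 2\exp(-c'\log(p-k))$, both $\Lambda_{\min}(Q)$ bounded below by a fixed constant (the first condition $n \ge \frac{4}{(\sqrt{8}-1)^2}k$ guarantees invertibility) and $\|\Delta\|_2 \lesssim \sqrt{k/n} \lesssim 1/\sqrt{\log(p-k)}$ (from the second condition). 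Hence the remainder is at most a constant multiple of $\frac{\sqrt{k}}{\sqrt{\log(p-k)}}\|\Delta z\|_\infty$, which is $\le \tfrac{C_1}{2}\|z\|_\infty$ on the main-term event once the degree satisfies $k = o(\log p)$. Intersecting the two good events and adding the failure probabilities yields the claimed bound $4\exp(-c_2\log(p-k))$.

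The main obstacle is the remainder term. The crude inequality $\|\cdot\|_\infty \le \|\cdot\|_2$ forces a factor $\sqrt{k}$ through the $\ell_2 \to \ell_\infty$ conversion, and controlling it requires both the \emph{sharp} spectral-norm concentration $\|\Delta\|_2 \lesssim \sqrt{k/n}$ and the low-degree assumption $k = o(\log p)$, so that $\sqrt{k}/\sqrt{\log(p-k)} = o(1)$. This is exactly where the present argument strengthens Lemma~5 of \cite{wainwright2009sharp}: whereas the original bound decays only at rate $\min\{k,\log(p-k)\}$ (which is weak precisely when $k$ is small), requiring the extra $\log(p-k)$ factor of samples upgrades every error term to decay at the uniform rate $\log(p-k)$. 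A secondary bookkeeping point is keeping the constants consistent---tying $c_1$ in the hypothesis to the target $C_1$ in the conclusion, and absorbing the union-bound factor $k$ into the exponent---but these are routine once the two concentration estimates are in place.
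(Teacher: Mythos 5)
Your route is genuinely different from the paper's, and in the low-degree regime it works, but as written it has a gap in coverage. The paper performs no first-order expansion at all: it sets $A = \left(\frac{1}{n}W^TW\right)^{-1} - I_k$, uses the one-line reduction $\mathbb{P}\left[\left\| Az \right\|_\infty \geq c_1 \left\| z \right\|_\infty\right] \leq \mathbb{P}\left[\left\| A \right\|_\infty \geq c_1\right] \leq \mathbb{P}\left[\left\| A \right\|_2 \geq c_1/\sqrt{k}\right]$, and then applies the Davidson--Szarek bound $\mathbb{P}\left[\left\| A \right\|_2 \geq \delta(n,k,t)\right] \leq 2e^{-nt^2/2}$ with $t$ tuned in three cases according to the size of $k$ relative to $\log(p-k)$. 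Your Bernstein-plus-union-bound treatment of the main term $\Delta z$ is sound (the summands are indeed centered products of jointly Gaussian variables, sub-exponential at scale $\left\| z \right\|_2 \leq \sqrt{k}\left\| z \right\|_\infty$) and even avoids the $\sqrt{k}$ loss on that term; but the $\sqrt{k}$ reappears in your remainder via $\left\| \Delta z \right\|_2 \leq \sqrt{k}\left\| \Delta z \right\|_\infty$, so in the end you buy nothing over the paper's cruder direct reduction, at the cost of a two-term decomposition and an extra invertibility event.

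The concrete gap: the lemma carries no hypothesis $k = o(\log p)$ --- it is asserted for every $k$ satisfying $n \geq \max\left\{\frac{4}{(\sqrt{8}-1)^2}k,\; \frac{64}{c_1^2}k\log(p-k)\right\}$ --- yet you invoke $k = o(\log p)$ twice, once to absorb the union-bound factor and, more seriously, to kill the remainder, which you bound by a constant times $\sqrt{k/\log(p-k)}\,\left\| \Delta z \right\|_\infty$. That factor exceeds any constant as soon as $k \gtrsim \log(p-k)$, so your argument proves the lemma only for $k \lesssim \log(p-k)$. The paper covers the full range precisely by case-splitting: its cases (a) and (b) handle $k \leq \log(p-k)$, which is where the strengthening over Wainwright is actually needed (there his rate $e^{-c\min\{k,\log(p-k)\}} = e^{-ck}$ is too weak), while its case (c), $k > \log(p-k)$, simply adopts Wainwright's original Lemma 5 verbatim, since in that regime $\min\{k,\log(p-k)\} = \log(p-k)$ already delivers the claimed rate. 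Adding that one-line fallback to your write-up closes the gap; without it, the statement as claimed is unproven for large $k$.
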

\begin{proof}
Set $A = \left( \frac{1}{n} W^T W \right)^{-1} - I_{k}$. Observe that $\mathbb{P}\left[ \left\| Az \right\|_\infty \geq c_1 \left\| z \right\|_\infty\right]\leq \mathbb{P}\left[ \left\|A\right\|_\infty \geq c_1\right]$ by the definition of the matrix infinity norm. Next, observe that since the infinity norm is the maximum absolute row sum of the matrix, we have that $\mathbb{P}\left[ \left\| A \right\|_\infty\geq c_1  \right]\leq \mathbb{P}\left[ \left\| A \right\|_2\geq c_1/\sqrt{k}  \right]$. From \cite[Lemma 9]{wainwright2009sharp} (which follows in a straightforward manner from the seminal results of \cite{davidson2001local}), we know that 
\begin{align}
\mathbb{P}\left[ \left\| A \right\|_2 \geq \delta(n,k,t)\right] \leq 2 e^{-nt^2/2},\label{eq:DSinWaiLemma}
\end{align}
where $\delta(n,k,t) = 2\left( \sqrt{\frac{k}{n}} + t \right) + \left( \sqrt{\frac{k}{n}}+t \right)^2$. We will divide the proof into three cases: 

\noindent\underline{\bf Case (a): $k\leq \frac{c_1^2}{64}$}\\
Suppose we pick $t = \sqrt{\frac{c_1}{\sqrt{k}}}-1 - \sqrt{\frac{k}{n}}$, under the setting of this case, provided $n\geq \frac{4}{\left( \sqrt{8}-1 \right)^2}k$, we have that $t > \frac{\sqrt{8} - 1}{2}>0$. Notice that for this choice of $t$, we have $\delta(n,k,t) = \frac{c_1}{\sqrt{k}}$. This gives us the following bound 
\begin{align}
\mathbb{P}\left[ \left\| A \right\|_2 \geq \frac{c_1}{\sqrt{k}} \right]&\leq 2 \exp\left\{-\frac{n}{2}\left({\sqrt{\frac{{c_1}}{\sqrt{k}}}} -1 - \sqrt{\frac{k}{n}}\right)^2\right\}\\
&\leq 2 \exp\left\{ -\frac{n}{2}\left( \frac{\sqrt{8} - 1}{2} \right)^2 \right\}
\end{align}

\noindent\underline{\bf Case (b): $\log (p-k)\geq k> \frac{c_1^2}{64}$}\\
Suppose we pick $t = \frac{c_1}{8\sqrt{k}}$, we have that $t <1$, by the assumption of this case. Then, if $n \geq \frac{64 k^2}{c_1^2}$ observe that 
\begin{align}
\delta(n,k,t) &= 2 \left( \sqrt{\frac{k}{n}} + t \right) + \left( \sqrt{\frac{k}{n}} + t \right)^2\\
&\leq \frac{c_1}{\sqrt{k}}.
\end{align}
This implies that 
\begin{align}
\mathbb{P}\left[ \left\| A \right\|_2 \geq \frac{c_1}{\sqrt{k}} \right]\leq 2 \exp\left\{ - \frac{nc_1^2}{128 k}\right\}.
\end{align}
Notice that if $n \geq \frac{64}{c_1^2}k\log(p-k)$, then $n \geq \frac{64 k^2}{c_1^2}$, as required. 

\noindent\underline{\bf Case (c): $k > \log(p-k)$}\\
In this case, we can adopt the result from Lemma 5 of \cite{wainwright2009sharp}. 

Putting all this together, we get the desired result.  
\end{proof}

\end{document}